\documentclass[11pt]{article}

\usepackage{amsmath} 
\usepackage{amsthm} 
\usepackage{amsfonts} 
\usepackage{amssymb} 
\usepackage{mathtools} 
\usepackage{thm-restate} 

\usepackage{xcolor} 
\usepackage{bm} 

\usepackage{array} 
\usepackage[inline]{enumitem} 

\usepackage{microtype} 
\usepackage{lmodern} 

\usepackage{tikz} 
\usetikzlibrary{cd} 
\usetikzlibrary{calc} 

\usepackage[in]{fullpage} 

\usepackage[hypcap=true]{subcaption} 

\usepackage[%
  plainpages=false,
  pdfpagelabels,
  colorlinks,
  citecolor=black,
  linkcolor=black,
  urlcolor=black,
  filecolor=black,
  bookmarksopen=false
]{hyperref} 
\usepackage[all]{hypcap} 
\hypersetup{hypertexnames=false}

\newtheoremstyle{thmstyle}
  {\medskipamount}
  {\smallskipamount}
  {\slshape}
  {0pt}
  {\bfseries}
  {.}
  { }
  {\thmname{#1}\thmnumber{ #2}{\normalfont\thmnote{ (#3)}}}
  
\newtheoremstyle{plainstyle}
  {\medskipamount}
  {\smallskipamount}
  {\rmfamily}
  {0pt}
  {\bfseries}
  {.}
  { }
  {\thmname{#1}\thmnumber{ #2}{\normalfont\thmnote{ (#3)}}}

\theoremstyle{thmstyle}
\newtheorem{theorem}{Theorem}[section]
\newtheorem{lemma}[theorem]{Lemma}

\theoremstyle{plainstyle}
\newtheorem{definition}[theorem]{Definition}

\newlist{enumdef}{enumerate}{1}
\setlist[enumdef]{before={\leavevmode}, label={\arabic*.}, ref={\thetheorem.\arabic*}}

\setlist[enumerate]{label={\roman*.}, ref={(\roman*)}} 

\makeatletter
\newcommand{\biggg}{\bBigg@\thr@@}
\def\bigggl{\mathopen\biggg}

\def\bigggr{\mathclose\biggg}
\makeatother

\numberwithin{equation}{section} 

\let\epsilon\varepsilon

\newcommand{\rn}{\bm}
\newcommand{\df}{\stackrel{\text{def}}{=}}

\newcommand{\comp}{\mathbin{\circ}}
\newcommand{\rest}{\mathord{\vert}}

\newcommand{\Floor}[1]{\left\lfloor#1\right\rfloor}

\newcommand{\given}[1][]{\mathrel{#1\vert}}

\DeclareMathOperator{\im}{im}
\DeclareMathOperator{\PAC}{PAC}
\DeclareMathOperator{\gVC}{gVC}
\DeclareMathOperator{\VC}{VC}

\DeclareMathOperator{\VCN}{VCN}
\DeclareMathOperator{\ev}{ev}
\newcommand{\kpart}[1][k]{#1\operatorname{-part}}

\newcommand{\EE}{\mathbb{E}}
\newcommand{\NN}{\mathbb{N}}
\newcommand{\PP}{\mathbb{P}}
\newcommand{\RR}{\mathbb{R}}

\newcommand{\cA}{\mathcal{A}}
\newcommand{\cB}{\mathcal{B}}
\newcommand{\cF}{\mathcal{F}}
\newcommand{\cH}{\mathcal{H}}
\newcommand{\cP}{\mathcal{P}}
\newcommand{\cS}{\mathcal{S}}

\newcommand{\Cervonenkis}{\v{C}ervonenkis}
\newcommand{\Szemeredi}{Szemer\'{e}di}

\title{High-arity Sample Compression}
\author{%
  Leonardo N.~Coregliano
  \and
  William Opich
}
\date{\today}

\begin{document}
\maketitle

\begin{abstract}
  Recently, a series of works have started studying variations of concepts from learning theory for product spaces, which can be
  collected under the name high-arity learning theory. In this work, we consider a high-arity variant of sample compression
  schemes and we prove that the existence of a high-arity sample compression scheme of non-trivial quality implies high-arity PAC
  learnability. 
\end{abstract}

\section{Introduction}

Given a family $\cH$ of subsets of some fixed $X$, can we approximately recover a hidden set $F\in\cH$ from its behavior on a
finite random i.i.d.\ sample from a distribution on $X$? Classes that admit such a procedure are called probably approximately
correctly (PAC) learnable.

An a priori unrelated question about the family $\cH$ is: given a sample $x_1,\ldots,x_m\in X$ along with the behavior of
some $F\in\cH$ on $x_1,\ldots,x_m$, is it possible to compress this information by selecting a small subsample
$x_{i_1},\ldots,x_{i_s}$ so that it is possible to perfectly retrieve $F$ from its behavior only on the subsample
$x_{i_1},\ldots,x_{i_m}$ along with very little extra information? Procedures that achieve this compression and retrieval are
called sample compression schemes.

As it turns out~\cite{LW86,DMY16,MY16}, PAC learnability of a class is equivalent to the existence of a sample
compression scheme that is non-trivial in the sense that the subsample (along with the extra information) is of size $o(m)$. In
turn, sample compression also enjoys a boosting phenomenon in the sense that when a non-trivial sample compression scheme
exists, then there is also a sample compression scheme of size $O(\ln m\cdot\ln\ln m)$. This puts sample compression into the
list of the many learning theory notions that are equivalent to finiteness of the Vapnik--\Cervonenkis\ ($\VC$) dimension (see
e.g.~\cite[Theorem~6.7]{SB14} for a non-comprehensive list as starting point).

Recently, there has been a series of works~\cite{Kob15,TT15,LM19a,LM19b,CM24,CM25a,CM25b,HMS26} studying variations of concepts
from learning theory for product spaces, which can be collected under the name high-arity learning theory. The overall
philosophy of high-arity learning theory is that when we replace the underlying set $X$ by a product space
$X_1\times\cdots\times X_k$ and require the learning theory concept to respect the product space structure, then more classes
satisfy the high-arity learning concept than the classic learning concept. To illustrate this high-arity learning theory
principle, introduce some of the high-arity concepts and frame the current work, let us briefly describe some of the results
that appear in the works that are more closely related to the current one (see Section~\ref{subsec:other} for a brief
description of the others):
\begin{itemize}
\item In~\cite{LM19a,LM19b}, Livni--Mansour introduce the high-arity uniform convergence property of a class $\cH$ of subsets of
  $X_1\times\cdots\times X_k$ as the property that for all distributions $\mu_i$ over $X_i$, if we sample $m=m(\epsilon,\delta)$
  i.i.d.\ points $\rn{x}_1^i,\ldots,\rn{x}_m^i$ from each $\mu_i$, then with probability at least $1-\delta$, the sample will be
  $\epsilon$-representative for $\cH$ in the sense that for every $F\in\cH$, the proportion of $k$-tuples of the form
  $(\rn{x}_{j_1}^1,\ldots,\rn{x}_{j_k}^k)$ that belong to $F$ is $\epsilon$-close to the $\bigotimes_{i=1}^k \mu_i$-measure of
  $F$.

  Livni--Mansour then introduced the graph-$\VC$ dimension ($\gVC$) of such a class $\cH$ as the maximum $\VC$ dimension of the
  classes of slices of $\cH$, that is, once we fix values for $k-1$ coordinates, this produces a slice class of sets on the last
  coordinate, of which one can compute the $\VC$ dimension and maximize over the choice of fixed values.

  Finally, they proved that high-arity uniform convergence property is equivalent to finiteness of $\gVC$ dimension and further
  equivalent to the existence of discriminating algorithms in terms of IPM~distance.
\item In~\cite{CM24,CM25a}, the first author and Malliaris introduced high-arity PAC learnability and proved that it is also
  equivalent to finiteness of $\gVC$ in the binary label case, and in the more general setting of finite labels is equivalent to
  an analogous dimension, which they called the Vapnik--\Cervonenkis--Natarajan $k$-dimension ($\VCN_k$), which is defined
  analogously to $\gVC$ as the maximum Natarajan dimension of the slices. This work also identified a phenomenon exclusive to
  high-arity theory: the presence of two variant of each high-arity concept called the partite and non-partite.

  For partite $k$-PAC learnability, we have a class $\cH$ of functions of the form $X_1\times\cdots\times X_k\to Y$ and we want
  an algorithm $\cA$ such that for all distributions $\mu_i$ over $X_i$, if we sample $m=m(\epsilon,\delta)$ i.i.d.\ points
  $\rn{x}_1^i,\ldots,\rn{x}_m^i$ from each $\mu_i$ and give $\cA$ these $k\cdot m$ points along with the $m^k$ values
  $F(\rn{x}_{j_1}^1,\ldots,\rn{x}_{j_k}^k)$ on every $k$-tuple $(j_1,\ldots,j_k)$, then with probability at least $1-\delta$,
  the algorithm $\cA$ outputs some $H$ that is $\epsilon$-close in $\bigotimes_{i=1}^k \mu_i$-measure to $F$ (see
  Definition~\ref{def:part} for a formal statement).

  For non-partite $k$-PAC learnability, we instead assume that $X_1 = \cdots = X_k$ and we only require $\cA$ to handle the case
  in which $\mu_1 = \cdots = \mu_k$ (see Definition~\ref{def:nonpart} for a formal statement). For a non-partite class $\cH$, we
  can certainly interpret it as a partite class $\cH^{\kpart}$ by simply ignoring the fact that $X_1 = \cdots = X_k$ and it is
  easy to see that partite $k$-PAC learnability of $\cH^{\kpart}$ implies non-partite $k$-PAC learnability, but one of the main
  results of~\cite{CM24} is that, provided the label set $Y$ is finite, these are in fact equivalent, and both characterized by
  finiteness of $\VCN_k$ dimension.

  This work also introduced a (quite technical) notion of high-arity agnostic learnability based on exchangeability theory,
  which we will briefly discuss in Section~\ref{subsec:approxag}.
\item In~\cite{HMS26}, Holzman--Moran--Shlimovich showed that partite $k$-PAC learnability can be upgraded to handle measures
  that are not product measures, but are in a sense not far from them. More specifically, suppose $\cP$ is a collection of
  distributions on $X_1\times\cdots\times X_k$ that are uniformly absolutely continuous with respect to their marginals in the
  sense that for every $\alpha > 0$, there exists $\beta > 0$ such that for every $\mu\in\cP$ and every measurable $B\subseteq
  X_1\times\cdots\times X_k$ with $\mu(B)\geq\alpha$, then $(\bigotimes_{i=1}^k \mu_i)(B)\geq\beta$, where $\mu_i$ is the
  marginal of $\mu$ on $X_i$. In this work, finiteness of $\gVC$ dimension (which is given the name ``linear $\VC$ dimension''
  here) is equivalent to learnability (in the usual classic PAC sense) versus all collections of distributions that are
  uniformly absolutely continuous with respect to their marginals (but, as expected, the learning guarantee depends on the
  quality of the modulus of uniform absolute continuity, i.e., the function $\beta=\beta(\alpha)$).
\end{itemize}

In this work, we introduce the notion of high-arity sample compression (see Definitions~\ref{def:partcomp}
and~\ref{def:nonpartcomp}) and we prove that both in the partite and non-partite settings, high-arity sample compression implies
high-arity PAC learnability (Theorems~\ref{thm:partSC->PAC} and~\ref{thm:nonpartSC->PAC}). As is typical with high-arity
concepts, the main challenge is the fact that labels in high-arity no longer are i.i.d., which precludes us from applying
concentration estimates such as Chernoff or Bernstein Inequalities. However, we show that both in the partite and non-partite
setting, we can appropriately define a martingale that allows us to use Azuma's Inequality instead. We briefly discuss in
Section~\ref{subsec:converse} the difficulties in proving the converse implication (i.e., that high-arity PAC learnability
implies high-arity sample compression).

\subsection{Other related work}
\label{subsec:other}

High-arity learning is much more complex than classical theory. In particular, even for something as simple as PAC learning, the
big picture is that there should be different levels of learnability depending on the particular information that is received by
the learner. For example, the $\PAC_k$ learnability model introduced by Kobayashi~\cite{Kob15}, in which the learner also
receives oracles for slices of the sample, was shown~\cite{TT15} to be governed by a combinatorial dimension connected to the
model-theoretic notion of $k$-dependence introduced by Shelah~\cite[\S~(H)]{She14}. The same notion of $k$-dependence is what
governs~\cite{CPT19} the asymptotic drop via a high-arity analogue of the Sauer--Shelah--Perles Lemma~\cite{Sau72,She72,Per72}
in the number of possible patterns on a grid of size $m$ from $m^k$ to $m^{k-\Omega(1)}$. In turn, such a drop in the number of
possible patterns was then used to characterize~\cite{CM25b} when one can perform matrix-completion in Netflix-like problems.

In a different direction, there is a strong connection between combinatorial dimensions from learning theory to the tamer
\Szemeredi\ Regularity Lemmas. In particular, the $\VC$ dimension was shown to characterize~\cite{AFN07,LS10} classes of graphs
that admit regularity partitions that are homogeneous and classes of hypergraphs that admit a regularity partition into
polynomially many parts~\cite{FPS19,Ter24b}. On the high-arity side, the $\gVC$ dimension (known under other names such as
slicewise $\VC$ dimension in the combinatorial community) was shown to characterize~\cite{TW22,CT20} classes of hypergraphs that
admit homogeneous (vertex) partitions and classes of hypergraphs that admit weak regularity lemmas into exponentially many
parts~\cite{Ter24a,Ter24b,GSW25}. Finally, the aforementioned $k$-dependence was shown to characterize hypergraph classes that
admit a $0/1$-valued strong regularity lemma~\cite{TW22,CT20}. Most of these connections happen via Haussler packing
property~\cite{Hau95} and a high-arity version of it~\cite{CT20,CM25a}.

\subsection{General notation}

The set of non-negative integers is denoted $\NN$ and the set of positive integers is denoted $\NN_+\df\NN\setminus\{0\}$. For a
finite set $V$ and $k\in\NN$, we let $\binom{V}{k}\df\{U\subseteq V\mid\lvert U\rvert=k\}$ be the set of subsets of $V$ of size
$k$ and we let $(V)_k$ be the set of injective functions of the form $[k]\to V$, where $[k]\df\{1,\ldots,k\}$. Given further
$n\in\NN$, we let $(n)_k\df n(n-1)\cdots(n-k+1)$ denote the falling factorial (so $\lvert(V)_k\rvert=(\lvert V\rvert)_k$). For
the particular case of $([k])_k$, we use the more standard notation $S_k$ (as $([k])_k$ is the set of permutations on $[k]$).
For a measurable space $\Omega=(X,\cB)$, we let $\Pr(\Omega)$ be the set of probability measure on $\Omega$. Countable sets will
always be equipped with the discrete $\sigma$-algebra.

\section{Partite setting}

In this section we cover the partite case, that is, the case in which our hypotheses classes are collections of functions of the
form $F\colon X_1\times\cdots\times X_k\to Y$, $k$-PAC learning notions are with respect to product measures
$\bigotimes_{i=1}^k\mu_i$ and samples are thought of as having $m$ points $x_1^j,\ldots,x_m^j$ from each $X_j$ as well as labels
$F(x_{\alpha_1}^1,\ldots,x_{\alpha_k}^k)\in Y$ for each $k$-tuple $\alpha\in[m]^k$.

We use a simplified version of the notation of~\cite{CM24} (the main difference is that since we do not cover either agnostic or
higher-order variables, which allows us to drastically simplify the notation; however, we briefly comment on how our proofs
adapt to these settings in Section~\ref{subsec:approxag}).

\begin{definition}[Partite setting]\label{def:part}
  Let $k\in\NN_+$, let $\Omega=(\Omega_i)_{i=1}^k$ be a $k$-tuple of non-empty standard Borel spaces $\Omega_i=(X_i,\cB_i)$,
  let $\Lambda=(Y,\cB')$ be a non-empty standard Borel space and let $m\in\NN$.
  \begin{enumdef}
  \item\label{def:part:alpha*} A \emph{$k$-partite unlabeled sample of size $m$} with respect to $\Omega$ is an element of
    $\prod_{i=1}^k X_i^m$. A \emph{$k$-partite labeled sample of size $m$} with respect to $\Omega$ and $\Lambda$ is an element
    $(x,y)$ of $(\prod_{i=1}^k X_i^m)\times Y^{[m]^k}$; we think of the coordinate $y_\alpha$ indexed by some $k$-tuple
    $\alpha\in[m]^k$ as the label on $x_{\alpha_1},\ldots,x_{\alpha_k}$. This concept is formalized as follows: for every
    $\alpha\in[m]^k$, we contra-variantly define maps
    \begin{align*}
      \alpha^*\colon\prod_{i=1}^k X_i^m\to\prod_{i=1}^k X_i, & &
      \alpha^*\colon Y^{[m]^k}\to Y
    \end{align*}
    that select the $k$-tuple indexed by $\alpha$ and its label respectively via
    \begin{align*}
      \alpha^*(x)_i & \df x_{\alpha_i} \qquad \left(x\in\prod_{j=1}^k X_j^m, i\in[k]\right),\\
      \alpha^*(y) & \df y_\alpha, \qquad \bigl(y\in Y^{[m]^k}\bigr).
    \end{align*}
  \item With a slight abuse of notation, we denote by $\Pr(\Omega)$ the set of $k$-tuples $\mu=(\mu_i)_{i=1}^k$, where
    $\mu_i\in\Pr(\Omega_i)$ is a probability measure on $\Omega_i$. Given further $m\in\NN$, we let $\mu^m\df\bigotimes_{i=1}^k
    \mu_i^m$ denote the product probability measure on the set $\prod_{i=1}^k X_i^m$ of $k$-partite unlabeled samples of size
    $m$ that is $m$ copies of each measure $\mu_i$. With a small abuse of notation, we view $\mu^1$ also as a measure over
    $\prod_{i=1}^k X_i$ via the natural identification with $\prod_{i=1}^k X_i^1$.
  \item\label{def:part:F*} A \emph{$k$-partite hypothesis} from $\Omega$ to $\Lambda$ is a measurable function
    $F\colon\prod_{i=1}^k X_i\to Y$. The set of $k$-partite hypothesis from $\Omega$ to $\Lambda$ is denoted
    $\cF_k(\Omega,\Lambda)$.

    Given a $k$-partite hypothesis $F\in\cF_k(\Omega,\Lambda)$ and $m\in\NN$, we define the function $F^*_m\colon\prod_{i=1}^k
    X_i^m\to Y^{[m]^k}$ that when given an unlabeled sample of size $m$, provides the labels of $F$ on the $k$-tuples of the
    sample; in a formula:
    \begin{gather*}
      F^*_m(x)_\alpha \df F\bigl(\alpha^*(x)\bigr)
      \qquad \left(x\in\prod_{i=1}^k X_i^m, \alpha\in [m]^k\right).
    \end{gather*}
  \item A \emph{$k$-partite hypothesis class} is a set $\cH\subseteq\cF_k(\Omega,\Lambda)$ of $k$-partite hypotheses that is
    further equipped with a $\sigma$-algebra such that:
    \begin{itemize}
    \item the evaluation map $\ev\colon\cH\times\prod_{i=1}^k X_i\to Y$ given by $\ev(H,x)\df H(x)$ is measurable;
    \item for every $H\in\cH$, the singleton $\{H\}$ is measurable;
    \item for every Borel space $\Upsilon$ and every measurable set $A\subseteq\cH\times\Upsilon$, the projection of $A$ onto
      $\Upsilon$, i.e., the set
      \begin{equation*}
        \{\upsilon\in\Upsilon \mid \exists H\in\cH, (H,\upsilon)\in A\}
      \end{equation*}
      is universally measurable\footnote{The reader unfamiliar with these technical measurability conditions can simply assume
      that $\cH$ is equipped with a $\sigma$-algebra that makes it a standard Borel space as it will imply this condition. The
      reader that wants to ignore measurability concerns can simply interpret this as ``all probabilities computed must make
      sense''; in fact, this universal measurability condition is required only to make sense of the notion of high-arity
      uniform convergence, which is out of the scope of this paper.}.
    \end{itemize}
  \item A \emph{$k$-partite loss function} over $\Lambda$ is a measurable function $\ell\colon(\prod_{i=1}^k X_i)\times Y\times
    Y\to\RR_{\geq 0}$ that when given a $k$-tuple $x$ and two labels $y$ and $y'$ assigns a value $\ell(x,y,y')$ that is the
    penalty of guessing $y$ on the tuple $x$ when the correct label was $y'$.

    We define
    \begin{gather*}
      \lVert\ell\rVert_\infty \df \sup_{\substack{x\in\prod_{i=1}^k X_i\\y,y'\in Y}} \ell(x,y,y')
    \end{gather*}
    and we say that $\ell$ is \emph{bounded} if $\lVert\ell\rVert_\infty < \infty$.

    Given further $\mu\in\Pr(\Omega)$ and $k$-partite hypotheses $F$ and $H$, the \emph{total loss} of $H$ with respect to $\mu$,
    $F$ and $\ell$ is
    \begin{gather*}
      L_{\mu,F,\ell}(H) \df \EE_{\rn{x}\sim\mu^1}\Bigl[\ell\bigl(\rn{x}, H(\rn{x}), F(\rn{x})\bigr)\Bigr].
    \end{gather*}

    We say that $F$ is \emph{realizable} in a $k$-partite hypothesis class $\cH\subseteq\cF_k(\Omega,\Lambda)$ with respect to
    $\mu$ and $\ell$ if $\inf_{H\in\cH} L_{\mu,F,\ell}(H) = 0$.
  \item For a $k$-partite hypothesis class $\cH'\subseteq\cF_k(\Omega,\Lambda)$, a \emph{($k$-partite) learning algorithm} with
    outputs in $\cH'$ is a measurable function
    \begin{gather*}
      \cA\colon\bigcup_{m\in\NN} \left(\left(\prod_{i=1}^k X_i^m\right)\times Y^{[m]^k}\right) \to \cH'.
    \end{gather*}

    We say that a $k$-partite hypothesis class $\cH\subseteq\cF_k(\Omega,\Lambda)$ is \emph{improperly $k$-PAC learnable} with
    respect to a $k$-partite loss function $\ell\colon(\prod_{i=1}^k X_i)\times Y\times Y\to\RR_{\geq 0}$ if there exists a
    learning algorithm $\cA$ with outputs in some $k$-partite hypothesis class $\cH'$ and a function
    $m^{\PAC}_{\cH,\ell,\cA}\colon(0,1)^2\to\RR_{\geq 0}$ such that for every $\epsilon,\delta\in(0,1)$, every
    $\mu\in\Pr(\Omega)$, every $F\in\cF_k(\Omega,\Lambda)$ that is realizable in $\cH$ with respect to $\mu$ and $\ell$ and
    every integer $m\geq m^{\PAC}_{\cH,\ell,\cA}(\epsilon,\delta)$, we have
    \begin{gather*}
      \PP_{\rn{x}\sim\mu^m}\biggl[
        L_{\mu,F,\ell}\Bigl(\cA\bigl(\rn{x}, F^*_m(\rn{x})\bigr)\Bigr)\leq\epsilon
        \biggr]
      \geq
      1 - \delta.
    \end{gather*}
    In plain English: when given a random sample of size $m$ drawn from $\mu^m$ labeled according to some realizable $F$, the
    algorithm $\cA$ outputs some hypothesis $H$ with total loss at most $\epsilon$ with probability at least $1-\delta$ over the
    randomness of the sample.

    The function $m^{\PAC}_{\cH,\ell,\cA}$ is called the \emph{learning guarantee} of $\cA$ and $\cA$ is called an \emph{improper
    $k$-PAC learner} for $\cH$ with respect to $\ell$.

    The notions of \emph{proper $k$-PAC learnability} and \emph{proper $k$-PAC learner} are defined analogously but requiring
    further that $\cH'=\cH$.
  \item Given a $k$-partite labeled sample $(x,y)\in(\prod_{i=1}^k X_i^m)\times Y^{[m]^k}$ of size $m$, a $k$-partite loss
    function $\ell\colon(\prod_{i=1}^k X_i)\times Y\times Y\to\RR_{\geq 0}$ and a hypothesis $H\in\cF_k(\Omega,\Lambda)$, the
    \emph{empirical loss} (or \emph{empirical risk}) of $H$ with respect to $(x,y)$ and $\ell$ is
    \begin{gather*}
      L_{x,y,\ell}(H) \df
      \begin{dcases*}
        \frac{1}{m^k}\cdot
        \sum_{\alpha\in[m]^k} \ell\Bigl(\alpha^*(x), H^*_m(x)_\alpha, y_\alpha\Bigr),
        & if $m > 0$,
        \\
        0, & if $m=0$.
      \end{dcases*}
    \end{gather*}

    We say that $(x,y)$ is \emph{sample realizable} in a $k$-partite hypothesis class $\cH\subseteq\cF_k(\Omega,\Lambda)$ with
    respect to $\ell$ if $\inf_{H\in\cH} L_{x,y,\ell}(H)$. It is straightforward to check that if $\mu\in\Pr(\Omega)$,
    $F\in\cF_k(\Omega,\Lambda)$ is realizable in $\cH$ with respect to $\mu$ and $\ell$ and $\rn{x}\sim\mu^m$, then with
    probability $1$, we have that $(\rn{x}, F^*_m(\rn{x}))$ is sample realizable in $\cH$ with respect to $\ell$.
  \item\label{def:part:alphasharp} Given a $k$-partite labeled sample $(x,y)\in(\prod_{i=1}^k X_i^m)\times Y^{[m]^k}$ of size
    $m$, a $k$-tuple $\alpha=(\alpha_i)_{i=1}^k$ of injective functions $\alpha_i\colon [n]\to[m]$ induces naturally a
    \emph{subsample} $(\alpha^\#(x),\alpha^\#(y))$ of size $n$ by picking the points and labels corresponding to the coordinates
    indexed by $\alpha_1,\ldots,\alpha_k$ (in their order). This is formalized as follows: we contra-variantly define the maps
    \begin{align*}
      \alpha^\#\colon\prod_{i=1}^k X_i^m\to\prod_{i=1}^k X_i^n, & &
      \alpha^\#\colon Y^{[m]^k}\to Y^{[n]^k}
    \end{align*}
    that select the points and labels according to
    \begin{align*}
      \bigl(\alpha^\#(x)_i\bigr)_v & \df (x_i)_{\alpha_i(v)}
      \qquad \left(x\in\prod_{j=1}^k X_j^m, i\in[k], v\in[n]\right),
      \\
      \bigl(\alpha^\#(y)\bigr)_\beta & \df y_{\alpha_1(\beta_1),\ldots,\alpha_k(\beta_k)}
      \qquad \bigl(y\in Y^{[m]^k}, \beta\in[n]^k\bigr).
    \end{align*}

    With slight abuse of notation, we also contra-variantly define the map
    \begin{gather*}
      \alpha^\#\colon\left(\prod_{i=1}^k X_i^m\right)\times Y^{[m]^k}
      \to
      \left(\prod_{i=1}^k X_i^n\right)\times Y^{[n]^k}
    \end{gather*}
    by letting it act as the previous two maps in the corresponding coordinates. The overload in the notation is justified by
    the fact that it makes the definition of $F^*_m$ of~\ref{def:part:F*} equivariant in the sense that the
    following diagram is commutative:
    \begin{equation*}
      \begin{tikzcd}
        \prod_{i=1}^k X_i^m
        \arrow[r, "F^*_m"]
        \arrow[d, "\alpha^\#"']
        &
        Y^{[m]^k}
        \arrow[d, "\alpha^\#"]
        \\
        \prod_{i=1}^k X_i^n
        \arrow[r, "F^*_n"]
        &
        Y^{[n]^k}
      \end{tikzcd}
    \end{equation*}
    
    The avid reader will note that the maps $\alpha^*$ of~\ref{def:part:alpha*} are essentially the particular case of the above
    when $n=1$.
  \end{enumdef}
\end{definition}

The next definition is that of a high-arity selection/compression scheme. Informally, a selection scheme has to have two
procedures: a selection procedure $\kappa_m$ that takes in a sample of size $m$ and outputs a smaller subsample of size $s_m$
along with some extra information (a header) and a reconstruction procedure $\rho_m$ that takes in the output of $\kappa_m$ and
reconstructs the original sample. The selection scheme is then called a compression scheme when it correctly reconstructs the
original sample (in the sense of zero loss).

\begin{definition}[Partite selection/compression schemes]\label{def:partcomp}
  Let $k\in\NN_+$, let $\Omega=(\Omega_i)_{i=1}^k$ be a $k$-tuple of non-empty standard Borel spaces $\Omega_i=(X_i,\cB_i)$ and
  let $\Lambda=(Y,\cB')$ be a non-empty standard Borel space.
  \begin{enumdef}
  \item A \emph{$k$-partite selection scheme} with outputs in a $k$-partite hypothesis class
    $\cH'\subseteq\cF_k(\Omega,\Lambda)$ is a tuple $\cS=(\sigma,\eta,\rho)$ such that:
    \begin{itemize}
    \item $\sigma=(\sigma_m)_{m\in\NN}$ is a sequence of maps, called \emph{selection maps}, such that each $\sigma_m$ when
      given a sample of size $m$, outputs $k$ tuples of some length $s_m$ that are the indices of the subsample selected.
      Formally, each $\sigma_m$ is a measurable function
      \begin{gather*}
        \sigma_m\colon\left(\prod_{j=1}^k X_j^m\right)\times Y^{[m]^k}\to\prod_{i=1}^k ([m])_{s_m},
      \end{gather*}
      where $s_m\in\NN$ is some integer, which we call the \emph{selection size} (note that we must have $s_m\leq m$ as the
      domain of $\sigma_m$ is never empty).
    \item $\eta=(\eta_m)_{m\in\NN}$ is a sequence of maps, called the \emph{header maps}, that when given a sample of size $m$,
      outputs some extra information, encoded as an element of $[h_m]$ for some fixed $h_m\in\NN$, that will aid in the
      reconstruction of the original sample. Formally, each $\eta_m$ is a measurable function
      \begin{gather*}
        \eta_m\colon\left(\prod_{j=1}^k X_j^m\right)\times Y^{[m]^k}\to [h_m],
      \end{gather*}
      where $h_m\in\NN_+$ is called the \emph{header size} of $\eta_m$.
    \item $\rho=(\rho_m)_{m\in\NN}$ is a sequence of maps, called the \emph{reconstruction maps}, that when given a labeled
      sample of size $s_m$ and header of size $h_m$ (intended to have come from $\sigma_m$ and $\eta_m$), reconstruct a labeled
      sample of size $m$. Formally, each $\rho_m$ is a measurable function
      \begin{gather*}
        \rho_m\colon\left(\prod_{i=1}^k X_i^{s_m}\right)\times Y^{[s_m]^k}\times[h_m]
        \to
        \cH'.
      \end{gather*}
    \end{itemize}

    The compression procedure of $\cS$ is formalized by defining a sequence $\kappa=(\kappa_m)_{m\in\NN}$ of \emph{compression
    maps} of the form
    \begin{gather*}
      \kappa_m\colon\left(\prod_{i=1}^k X_i^m\right)\times Y^{[m]^k}
      \to
      \left(\prod_{i=1}^k X_i^{s_m}\right)\times Y^{[s_m]^k}\times [h_m]
    \end{gather*}
    by
    \begin{gather*}
      \kappa_m(x,y)
      \df
      \bigl(\sigma_m(x,y)^\#(x,y), \eta_m(x,y)\bigr)
      \qquad
      \left(x\in\prod_{i=1}^k X_i^m, y\in Y^{[m]^k}\right).
    \end{gather*}
    (Recall from Definition~\ref{def:part:alphasharp} that since $\sigma_m(x,y)\in\prod_{i=1}^k ([m])_{s_m}$ is a $k$-tuple of
    injections, it contra-variantly induces a map that selects a subsample $\sigma_m(x,y)^\#(x,y)$ of size $s_m$ from the sample
    $(x,y)$ of size $m$ and the map $\kappa_m$ then takes this subsample an appends to it the header $\eta_m(x,y)$.)

    We will use the notation
    \begin{align*}
      \sigma^\cS & \df \sigma = (\sigma_m)_{m\in\NN}, &
      \eta^\cS & \df \eta = (\eta_m)_{m\in\NN}, &
      \rho^\cS & \df \rho = (\rho_m)_{m\in\NN},
      \\
      s^\cS_m & \df s_m, &
      h^\cS_m & \df h_m, &
      \kappa^\cS & \df \kappa = (\kappa_m)_{m\in\NN}.
    \end{align*}

    If $Y$ is finite, the \emph{compression size} and the \emph{compression bitlength} functions of $\cS$ are the functions
    $c_\cS,b_\cS\colon\NN\to\RR_{\geq 0}$ given by
    \begin{align*}
      c_\cS(m) & \df h_m\cdot\lvert Y\rvert^{s_m^k}, &
      b_\cS(m) & \df \log_2\bigl(c_\cS(m)\bigr).
    \end{align*}
  \item A $k$-partite selection scheme $\cS$ with outputs in a $k$-partite hypothesis class $\cH'\subseteq\cF_k(\Omega,\Lambda)$
    is called a \emph{sample compression scheme} for a $k$-partite hypothesis class $\cH\subseteq\cF_k(\Omega,\Lambda)$ with
    respect to a $k$-partite loss function $\ell\colon(\prod_{i=1}^k X_i)\times Y\times Y\to\RR_{\geq 0}$ if for every $m\in\NN$
    and every labeled sample $(x,y)\in(\prod_{i=1}^k X_i^m)\times Y^{[m]^k}$ of size $m$ that is sample realizable with respect
    to $\ell$, we have
    \begin{gather*}
      L_{x,y,\ell}\Bigl(\rho^\cS_m\bigl(\kappa^\cS_m(x,y)\bigr)\Bigr) = 0,
    \end{gather*}
    or in plain English: if we compress $(x,y)$ via $\kappa^\cS$ then reconstruct via $\rho^\cS$, then the result has zero
    empirical loss.

    We say that the sample compression scheme is \emph{proper} if $\cH=\cH'$.
  \end{enumdef}
\end{definition}

Our goal is to prove that the existence of a $k$-partite compression scheme (of essentially non-trivial quality) implies
$k$-PAC learnability. We start with a concentration lemma based on Azuma's Inequality.

\begin{lemma}\label{lem:partconc}
  Let $k\in\NN_+$, let $\Omega=(\Omega_i)_{i=1}^k$ be a $k$-tuple of non-empty standard Borel spaces $\Omega_i=(X_i,\cB_i)$, let
  $\Lambda=(Y,\cB')$ be a non-empty standard Borel space, let $\cS$ be a $k$-partite selection scheme and let
  $\ell\colon\prod_{i=1}^k X_i\times Y\times Y\to\RR_{\geq 0}$ be a bounded $k$-partite loss function.

  Then for every $\epsilon > 0$, every $\mu\in\Pr(\Omega)$, every integer $m\in\NN$ such that
  \begin{gather}\label{eq:partconc:condition}
    \left(1 - \frac{(m-s^\cS_m)^k}{m^k}\right)\cdot\lVert\ell\rVert_\infty < \epsilon,
  \end{gather}
  every $\eta\in [h^\cS_m]$, every $k$-tuple $\sigma=(\sigma_i)_{i=1}^k$ of injections $\sigma_i\colon[s^\cS_m]\to[m]$ and every
  $F\in\cF_k(\Omega,\Lambda)$, we have
  \begin{multline*}
    \PP_{\rn{x}\sim\mu^m}\Biggl[
      L_{\mu,F,\ell}\biggl(\rho^\cS\Bigl(\sigma^\#\bigl(\rn{x},F^*_m(\rn{x})\bigr), \eta\Bigr)\biggr)
      -
      L_{\rn{x}, F^*_m(\rn{x}), \ell}\biggl(\rho^\cS\Bigl(\sigma^\#\bigl(\rn{x},F^*_m(\rn{x})\bigr), \eta\Bigr)\biggr)
      \geq
      \epsilon
      \Biggr]
    \\
    \leq
    \exp\left(
    -\frac{\widetilde{\epsilon}\cdot(m-s^\cS_m)}{2\cdot k\cdot\lVert\ell\rVert_\infty^2}
    \right),    
  \end{multline*}
  where
  \begin{gather*}
    \widetilde{\epsilon}
    \df
    (\epsilon-(1-(m-s^\cS_m)^k/m^k)\cdot\lVert\ell\rVert_\infty)^2
    >
    0.
  \end{gather*}

  In particular, if $s^\cS_m\leq o(m)$ as $m\to\infty$, then
  \begin{multline*}
    \PP_{\rn{x}\sim\mu^m}\Biggl[
      L_{\mu,F,\ell}\biggl(\rho^\cS\Bigl(\sigma^\#\bigl(\rn{x},F^*_m(\rn{x})\bigr), \eta\Bigr)\biggr)
      -
      L_{\rn{x}, F^*_m(\rn{x}), \ell}\biggl(\rho^\cS\Bigl(\sigma^\#\bigl(\rn{x},F^*_m(\rn{x})\bigr), \eta\Bigr)\biggr)
      \geq
      \epsilon
      \Biggr]
    \\
    \leq
    \exp\left(
    -\frac{\epsilon^2}{2\cdot k\cdot\lVert\ell\rVert_\infty^2}\cdot m
    +\frac{\epsilon\cdot(\epsilon + \lVert\ell\rVert_\infty)}{\lVert\ell\rVert_\infty^2}\cdot o(m)
    \right),
  \end{multline*}
  where the $o(m)$ error term is as $m\to\infty$ with $k$ fixed, is uniform in $\epsilon$, and has the same uniformity in
  $\lVert\ell\rVert_\infty$ as the estimate $s^\cS_m\leq o(m)$.
\end{lemma}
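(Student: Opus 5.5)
Write $s\df s^\cS_m$ and, for each $i\in[k]$, let $I_i\df\im(\sigma_i)$ and $J_i\df[m]\setminus I_i$, so $\lvert J_i\rvert=m-s$. The hypothesis $H\df\rho^\cS(\sigma^\#(\rn{x},F^*_m(\rn{x})),\eta)$ depends only on the coordinates $(\rn{x}_i)_{I_i}$, $i\in[k]$. Set $J\df\prod_{i=1}^k J_i$. I split the empirical loss $L_{\rn{x},F^*_m(\rn{x}),\ell}(H)$ into the part indexed by $\alpha\in J$ and the rest. The rest consists of $m^k-(m-s)^k$ terms, each in $[0,\lVert\ell\rVert_\infty]$. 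The empirical loss is therefore at least $\frac{(m-s)^k}{m^k}\,\widehat L$, where
\[
\widehat L\df\frac{1}{(m-s)^k}\sum_{\alpha\in J}\ell\bigl(\alpha^*(\rn{x}),H(\alpha^*(\rn{x})),F(\alpha^*(\rn{x}))\bigr).
\]
Since $L_{\mu,F,\ell}(H)\le\lVert\ell\rVert_\infty$, we get
\[
L_{\mu,F,\ell}(H)-L_{\rn{x},F^*_m(\rn{x}),\ell}(H)\le L_{\mu,F,\ell}(H)-\widehat L+\Bigl(1-\frac{(m-s)^k}{m^k}\Bigr)\lVert\ell\rVert_\infty .
\]
So the bad event is contained in $\{L_{\mu,F,\ell}(H)-\widehat L\ge\sqrt{\widetilde\epsilon}\}$, and the threshold $\sqrt{\widetilde\epsilon}$ is positive by~\eqref{eq:partconc:condition}.

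Next I condition on the selected coordinates $(\rn{x}_i)_{I_i}$. This fixes $H$, and the remaining coordinates $(\rn{x}_i)_{J_i}$ are still i.i.d.\ from $\mu_i$. The grid $J$ is then a $k$-partite sample of size $m-s$ from $\mu$, and $\widehat L$ is the empirical loss of the fixed hypothesis $H$ on it, with $\EE[\widehat L]=L_{\mu,F,\ell}(H)$. Averaging the conditional bound over the conditioning gives the unconditional one.

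The main step is the concentration of $\widehat L$, where the terms are not independent. I reveal the $k(m-s)$ independent points $(\rn{x}_i)_v$ ($i\in[k]$, $v\in J_i$) one at a time and take the Doob martingale $M_t\df\EE[\widehat L\mid\text{first }t\text{ points}]$. Changing a single point $(\rn{x}_i)_v$ only affects the $(m-s)^{k-1}$ tuples $\alpha\in J$ with $\alpha_i=v$. Hence the martingale differences are bounded by $c\df\lVert\ell\rVert_\infty/(m-s)$. Azuma's Inequality with $N=k(m-s)$ steps then gives
\[
\PP\bigl[M_0-M_N\ge t\bigr]\le\exp\bigl(-t^2/(2Nc^2)\bigr)=\exp\bigl(-t^2(m-s)/(2k\lVert\ell\rVert_\infty^2)\bigr).
\]
Plugging in $t^2=\widetilde\epsilon$ yields exactly the claimed bound. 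The care needed is in checking that the difference bound is uniform and holds conditionally; this is where the bounded-differences property of the product structure is used.

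For the asymptotic statement, take $s\le o(m)$. Then $1-(m-s)^k/m^k\le ks/m=o(1)$, so
\[
\widetilde\epsilon\ge\epsilon^2-2\epsilon\lVert\ell\rVert_\infty\frac{ks}{m}.
\]
Multiplying by $m-s$ gives
\[
\widetilde\epsilon\,(m-s)\ge\epsilon^2 m-\epsilon^2 s-2k\epsilon\lVert\ell\rVert_\infty s ,
\]
which, after dividing by $2k\lVert\ell\rVert_\infty^2$, is of the stated form $\epsilon^2 m/(2k\lVert\ell\rVert_\infty^2)-\epsilon(\epsilon+\lVert\ell\rVert_\infty)\,o(m)/\lVert\ell\rVert_\infty^2$. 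The $o(m)$ term is a constant multiple of $s$, so it is uniform in $\epsilon$ and inherits the uniformity of $s^\cS_m\le o(m)$. Note that~\eqref{eq:partconc:condition} holds for all large $m$ in this case.
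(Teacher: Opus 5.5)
Your proposal is correct and follows the paper's proof essentially step for step: it compares with the empirical loss on the unselected $(m-s^\cS_m)$-grid, conditions on the selected coordinates (which determine the reconstructed hypothesis), runs the Doob martingale over the $k\cdot(m-s^\cS_m)$ unselected points with differences at most $\lVert\ell\rVert_\infty/(m-s^\cS_m)$, and applies Azuma's Inequality. The differences are cosmetic: you skip the relabeling of $\im(\sigma_i)$, use a one-sided comparison, and make the asymptotic error term explicitly $s^\cS_m$ via Bernoulli's inequality. There is one small slip: your displayed comparison inequality needs $\widehat L\le\lVert\ell\rVert_\infty$, not $L_{\mu,F,\ell}(H)\le\lVert\ell\rVert_\infty$.
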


\begin{proof}
  The result is trivial if $m=0$, so suppose $m > 0$. By symmetry, we may suppose without loss of generality that for each
  $i\in[k]$, $\sigma_i\colon[s^\cS_m]\to[m]$ is such that $\im(\sigma_i)=\{m-s^\cS_m+1,m-s^\cS_m+2,\ldots,m\}$. Pick $\rn{x}$ at
  random according to $\mu^m$ and for ease of notation, let
  \begin{align*}
    n & \df m - s^\cS_m, &
    \rn{H} & \df \rho^\cS_m\Bigl(\sigma^\#\bigl(\rn{x},F^*_m(\rn{x})\bigr),\eta\Bigr), &
    \rn{x'} & \df \iota^\#(\rn{x}),
  \end{align*}
  where $\iota=(\iota_i)_{i=1}^k$ is the $k$-tuple of inclusion maps $\iota_i\colon[n]\to[m]$ (so $\rn{x'}$ is the random
  sample corresponding to the first $n$ points of $\rn{x}$ in each of the $k$ coordinates, in their natural order).

  Note that
  \begin{align*}
    L_{\rn{x},F^*_m(\rn{x}),\ell}(\rn{H}) - L_{\rn{x'},F^*_n(\rn{x'}),\ell}(\rn{H})
    & =
    \begin{multlined}[t]
      \frac{1}{m^k}\cdot
      \sum_{\alpha\in[m]^k} \ell\Bigl(\alpha^*(\rn{x}), \rn{H}^*_m(\rn{x})_\alpha, F^*_m(\rn{x})\Bigr)
      \\
      -
      \frac{1}{n^k}\cdot
      \sum_{\alpha\in[n]^k} \ell\Bigl(\alpha^*(\rn{x}), \rn{H}^*_m(\rn{x})_\alpha, F^*_m(\rn{x})\Bigr)
    \end{multlined}
    \\
    & =
    \begin{multlined}[t]
      \left(\frac{1}{m^k} - \frac{1}{n^k}\right)\cdot
      \sum_{\alpha\in[n]^k} \ell\Bigl(\alpha^*(\rn{x}), \rn{H}^*_m(\rn{x})_\alpha, F^*_m(\rn{x})\Bigr)
      \\
      +
      \frac{1}{m^k}\cdot
      \sum_{\alpha\in[m]^k\setminus[n]^k} \ell\Bigl(\alpha^*(\rn{x}), \rn{H}^*_m(\rn{x})_\alpha, F^*_m(\rn{x})\Bigr),
    \end{multlined}
  \end{align*}
  and since $\ell$ is bounded and noting that the first term in the above is non-positive (as $n\leq m$) and the second term is
  non-negative, we conclude that
  \begin{gather}\label{eq:partconc:diff}
    \begin{aligned}
      \bigl\lvert L_{\rn{x},F^*_m(\rn{x}),\ell}(\rn{H}) - L_{\rn{x'},F^*_n(\rn{x'}),\ell}(\rn{H})\bigr\rvert
      & \leq
      \lVert\ell\rVert_\infty\cdot
      \max\left\{
      \left(\frac{1}{n^k} - \frac{1}{m^k}\right)\cdot n^k,
      \frac{1}{m^k}\cdot (m^k - n^k)
      \right\}
      \\
      & =
      \left(1 - \frac{n^k}{m^k}\right)\cdot\lVert\ell\rVert_\infty.
    \end{aligned}
  \end{gather}

  Since we are mostly concerned with the setting in which $s^\cS_m\leq o(m)$ (i.e., $n = (1-o(1))\cdot m$), the difference above
  is small so we will compare $L_{\mu,F,\ell}(\rn{H})$ with $L_{\rn{x'},F^*_n(\rn{x'}),\ell}$ first.

  Let $\cF_0$ be the $\sigma$-algebra generated by $((\rn{x}_i)_v)_{i\in[k],v\in[m]\setminus [n]}$ (i.e., generated by
  $\sigma^\#(\rn{x})$), so $\rn{H}$ is $\cF_0$-measurable. For each $t\in[k\cdot n]$, we let $\cF_i$ be the $\sigma$-algebra
  generated by $\cF_0$ and all $(\rn{x}_i)_v$ with $(i,v)\in[k]\times[n]$ satisfying
  \begin{gather*}
    (i-1)\cdot n + v-1 < t,
  \end{gather*}
  that is, from $\cF_{t-1}$ to $\cF_t$, we add the variable $(\rn{x}_{i_t})_{v_t}$, where
  \begin{align}\label{eq:partconc:itvt}
    i_t & \df \Floor{\frac{t-1}{n}} + 1, &
    v_t & \df \bigl((t-1)\bmod n\bigr) + 1.
  \end{align}

  We also let $\rn{Z}_t\df\EE[L_{\rn{x'},F^*_n(\rn{x'}),\ell}(\rn{H})\given\cF_t]$ and we note that $(\rn{Z}_t)_{t=0}^{k\cdot
    n}$ forms a (Doob) martingale with respect to $(\cF_t)_{t=0}^{k\cdot n}$:
  \begin{gather*}
    \EE[\rn{Z}_{t+1}\given\cF_t]
    =
    \EE\Bigl[
      \EE\bigl[L_{\rn{x'},F^*_n(\rn{x'}),\ell}(\rn{H})\given[\bigm]\cF_{t+1}\bigr]
      \given[\Bigm]
      \cF_t
      \Bigr]
    =
    \EE\bigl[L_{\rn{x'},F^*_n(\rn{x'}),\ell}(\rn{H})\given[\bigm]\cF_t\bigr]
    =
    \rn{Z}_t.
  \end{gather*}
  Note further that
  \begin{align*}
    \rn{Z}_0
    & =
    \EE\bigl[L_{\rn{x'},F^*_n(\rn{x'}),\ell}(\rn{H})\given[\bigm]\cF_0\bigr]
    =
    L_{\mu,F,\ell}(\rn{H}),
    \\
    \rn{Z}_{k\cdot n}
    & =
    \EE\bigl[L_{\rn{x'},F^*_n(\rn{x'}),\ell}(\rn{H})\given[\bigm]\cF_{k\cdot n}\bigr]
    =
    L_{\rn{x'},F^*_n(\rn{x'}),\ell}(\rn{H}),
  \end{align*}
  where the former is by linearity of conditional expectation (and the fact that $\rn{H}$ is $\cF_0$-measurable) and the latter
  is simply because the underlying random variable is $\cF_{k\cdot n}$-measurable.

  We will show that $\rn{Z}_0$ and $\rn{Z}_{k\cdot n}$ are close with high (conditional) probability using Azuma's Inequality.
  Toward that, we note that for each $t\in[k\cdot n]$ we have
  \begin{multline}\label{eq:partconc:martdiff}
    \lvert\rn{Z}_t - \rn{Z}_{t-1}\rvert
    =
    \Biggl\lvert
    \frac{1}{n^k}\cdot
    \sum_{\alpha\in[n]^k}
    \biggl(
    \EE\Bigl[
      \ell\bigl(\alpha^*(\rn{x'}), \rn{H}^*_n(\rn{x'}), \rn{F}^*_n(\rn{x'})\bigr)
      \given[\Bigm]
      \cF_t
      \Bigr]
    \\
    -
    \EE\Bigl[
      \ell\bigl(\alpha^*(\rn{x'}), \rn{H}^*_n(\rn{x'}), \rn{F}^*_n(\rn{x'})\bigr)
      \given[\Bigm]
      \cF_{t-1}
      \Bigr]
    \biggr)
    \Biggr\rvert.
  \end{multline}

  Let us now argue that most of the terms in the sum above are zero. Recall that what changes from $\cF_{t-1}$ to $\cF_t$ is the
  addition of the variable $(\rn{x}_{i_t})_{v_t}$, where $(i_t,v_t)$ are given by~\eqref{eq:partconc:itvt}. Consider a term
  $\alpha\in[n]^k$ such that $\alpha_{i_t}\neq v_t$, then
  \begin{gather*}
    \ell\bigl(\alpha^*(\rn{x'}), \rn{H}^*_n(\rn{x'}), \rn{F}^*_n(\rn{x'})\bigr)
  \end{gather*}
  is independent from $(\rn{x}_{i_t})_{v_t}$; in fact, since $((\rn{x}_i)_v)_{i\in[k], v\in[m]}$ is mutually independent, the
  above is conditionally independent from $(\rn{x}_{i_t})_{v_t}$ given $\cF_{t-1}$, so the above gives the same value when
  conditioned on $\cF_{t-1}$ as it does when conditioned on $\cF_t$. Thus, whenever $\alpha_{i_t}\neq v_t$, the corresponding
  term in~\eqref{eq:partconc:martdiff} is zero.

  As there are exactly $n^{k-1}$ many $\alpha\in[n]^k$ with $\alpha_{i_t} = v_t$, using triangle inequality and the fact that
  $\ell$ is bounded, we conclude that
  \begin{gather*}
    \lvert\rn{Z}_t - \rn{Z}_{t-1}\rvert
    \leq
    \frac{n^{k-1}}{n^k}\cdot\lVert\ell\rVert_\infty
    =
    \frac{\lVert\ell\rVert_\infty}{n}.
  \end{gather*}
  Recalling that
  \begin{gather*}
    \widetilde{\epsilon}
    \df
    \epsilon - \left(1 - \frac{n^k}{m^k}\right)\cdot\lVert\ell\rVert_\infty
    >
    0
  \end{gather*}
  (where the inequality follows from our assumption~\eqref{eq:partconc:condition}), by Azuma's Inequality (conditioned on
  $\cF_0$), we get
  \begin{align*}
    \PP_{\rn{x}\sim\mu^m}\biggl[
      L_{\mu,F,\ell}(\rn{H}) - L_{\rn{x'},F^*_n(\rn{x'}),\ell}(\rn{H})
      \geq
      \widetilde{\epsilon}
      \biggr]
    & =
    \PP[\rn{Z}_{k\cdot n} - \rn{Z}_0\leq - \widetilde{\epsilon}]
    \\
    & \leq
    \exp\left(-\frac{\widetilde{\epsilon}^2}{2\cdot\sum_{t=1}^{k\cdot n} (\lVert\ell\rVert_\infty/n)^2}\right)
    \\
    & =
    \exp\left(-\frac{\widetilde{\epsilon}^2\cdot n}{2\cdot k\cdot\lVert\ell\rVert_\infty^2}\right)
    \\
    & =
    \exp\left(-\frac{\widetilde{\epsilon}^2\cdot (m-s^\cS_m)}{2\cdot k\cdot\lVert\ell\rVert_\infty^2}\right).
  \end{align*}

  Putting this together with~\eqref{eq:partconc:diff}, we conclude that
  \begin{gather*}
    \PP_{\rn{x}\sim\mu^m}\bigl[
      L_{\mu,F,\ell}(\rn{H})
      -
      L_{\rn{x}, F^*_m(\rn{x}), \ell}(\rn{H})
      \geq
      \epsilon
      \bigr]
    \leq
    \exp\left(
    -\frac{\widetilde{\epsilon}\cdot(m-s^\cS_m)}{2\cdot k\cdot\lVert\ell\rVert_\infty^2}
    \right),    
  \end{gather*}
  as desired.

  \medskip

  Let us now estimate the above asymptotically when $s^\cS_m\leq o(m)$. First note that $1 - (m-s^\cS)^k/m^k = o(1)$ as
  $m\to\infty$ with $k$ fixed as $m - s^\cS_m = (1 - o(1))\cdot m$. Then we get
  \begin{align*}
    \exp\left(-\frac{\widetilde{\epsilon}^2\cdot (m-s^\cS_m)}{2\cdot k\cdot\lVert\ell\rVert_\infty^2}\right),
    & =
    \exp\left(
    -\frac{(\epsilon-\lVert\ell\rVert_\infty\cdot o(1))^2\cdot (1-o(1))\cdot m}{2\cdot k\cdot\lVert\ell\rVert_\infty^2}
    \right)
    \\
    & \leq
    \exp\left(
    -\frac{\epsilon^2}{2\cdot k\cdot\lVert\ell\rVert_\infty^2}\cdot m
    +\frac{\epsilon\cdot(\epsilon + \lVert\ell\rVert_\infty)}{\lVert\ell\rVert_\infty^2}\cdot o(m)
    \right),
  \end{align*}
  where the $o(m)$ error term is as $m\to\infty$ with $k$ fixed, is uniform in $\epsilon$, and has the same uniformity in
  $\lVert\ell\rVert_\infty$ as the estimate $s^\cS_m\leq o(m)$.
\end{proof}

The next theorem shows that every $k$-partite hypothesis class that admits a $k$-partite compression scheme $\cS$ of essentially
non-trivial quality must be $k$-PAC learnable. More specifically, a trivial $k$-partite compression scheme has $s^\cS_m=m$ and
$h^\cS_m=1$, which yields
\begin{gather*}
  \ln(h^\cS_m) + k\cdot s^\cS_m\cdot\ln(m) = k\cdot m\cdot\ln(m)
\end{gather*}
and our assumption is that we have a $k$-partite compression scheme in which the quantity above is $o(m)$ instead.

\begin{theorem}\label{thm:partSC->PAC}
  Let $k\in\NN_+$, let $\Omega=(\Omega_i)_{i=1}^k$ be a $k$-tuple of non-empty standard Borel spaces $\Omega_i=(X_i,\cB_i)$, let
  $\Lambda=(Y,\cB')$ be a non-empty standard Borel space with $\lvert Y\rvert\geq 2$, let $\ell\colon\prod_{i=1}^k X_i\times
  Y\times Y\to\RR_{\geq 0}$ be a bounded $k$-partite loss function and let $\cS$ be a $k$-partite compression scheme for a
  $k$-partite hypothesis class $\cH\subseteq\cF_k(\Omega,\Lambda)$ such that
  \begin{gather}\label{eq:partSC->PAC:condition}
    \ln(h^\cS_m) + k\cdot s^\cS_m\cdot\ln(m) \leq o(m)
  \end{gather}
  as $m\to\infty$ (which in particular follows from $b_\cS(m)\leq o(m/\ln(m))$ when $Y$ is finite), then $\cH$ is $k$-PAC
  learnable with respect to $\ell$. More specifically, letting $\cA$ be given by
  \begin{gather*}
    \cA(x,y)
    \df
    \rho^\cS_m\bigl(\kappa^\cS_m(x,y)\bigr)
    \qquad
    \left(m\in\NN, x\in\prod_{i=1}^k X_i^m, y\in Y^{([m])_k}\right)
  \end{gather*}
  gives a $k$-PAC learner for $\cH$ with respect to $\ell$ with learning guarantee
  \begin{multline}\label{eq:partSC->PAC:mPAC}
    m^{\PAC}_{\cH,\ell,\cA}(\epsilon,\delta)
    \df
    \min\Biggl\{m_0\in\NN_+,
    \;\Biggm\vert\;
    \forall m\geq m_0,
    \left(1 - \frac{(m-s^\cS_m)^k}{m^k}\right) < \epsilon
    \\
    \land
    (m)_{s^\cS_m}^k\cdot h^\cS_m\cdot
    \exp\left(
    -\frac{
      (\epsilon - (1-(m-s^\cS_m)^k/m^k)\cdot\lVert\ell\rVert_\infty)^2\cdot(m-s^\cS_m)
    }{
      2\cdot k\cdot\lVert\ell\rVert_\infty^2
    }
    \right)
    \leq
    \delta
    \Biggr\}.
  \end{multline}

  In particular, if
  \begin{gather}\label{eq:partSC->PAC:strongcondition}
    \ln(h^\cS_m) + k\cdot s^\cS_m\cdot\ln(m)
    \leq
    \bigl(1+o(1)\bigr)\cdot
    \sqrt{2\cdot k\cdot m}\cdot\ln(m),
  \end{gather}
  as $m\to\infty$, then
  \begin{gather*}
    m^{\PAC}_{\cH,\ell,\cA}(\epsilon,\delta)
    \leq
    \bigl(1 + o(1)\bigr)\cdot
    \frac{2\cdot k\cdot\lVert\ell\rVert_\infty^2}{\epsilon^2}\cdot
    \max\left\{1, \ln\left(\frac{1}{\delta}\right)\right\},
  \end{gather*}
  where the $o(1)$ error term is as $\epsilon\to 0$ with $k$ fixed, is uniform in $\delta$, and has the same uniformity in
  $\lVert\ell\rVert_\infty$ as the estimate $o(1)$ in~\eqref{eq:partSC->PAC:strongcondition} provided $\lVert\ell\rVert_\infty$
  is bounded away from $0$.
\end{theorem}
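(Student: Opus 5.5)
The plan is a union bound over all possible compressions, combined with Lemma~\ref{lem:partconc}. Fix $\epsilon,\delta\in(0,1)$, $\mu\in\Pr(\Omega)$, $F$ realizable in $\cH$ with respect to $\mu$ and $\ell$, and $m\geq m^{\PAC}_{\cH,\ell,\cA}(\epsilon,\delta)$. With probability $1$ over $\rn{x}\sim\mu^m$, the labeled sample $(\rn{x},F^*_m(\rn{x}))$ is sample realizable, as noted after the definition of empirical loss. Since $\cS$ is a compression scheme, the output $\rn{H}\df\cA(\rn{x},F^*_m(\rn{x}))$ then has zero empirical loss.

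Moreover, $\rn{H}$ always equals $\rho^\cS_m(\sigma^\#(\rn{x},F^*_m(\rn{x})),\eta)$ for some $\sigma\in\prod_{i=1}^k([m])_{s^\cS_m}$ and some $\eta\in[h^\cS_m]$, namely $\sigma=\sigma^\cS_m(\rn{x},F^*_m(\rn{x}))$ and $\eta=\eta^\cS_m(\rn{x},F^*_m(\rn{x}))$. Hence the event $L_{\mu,F,\ell}(\rn{H})>\epsilon$ is, up to a null set, contained in the union over all fixed pairs $(\sigma,\eta)$ of the events
\[
  L_{\mu,F,\ell}\Bigl(\rho^\cS_m\bigl(\sigma^\#(\rn{x},F^*_m(\rn{x})),\eta\bigr)\Bigr)
  -L_{\rn{x},F^*_m(\rn{x}),\ell}\Bigl(\rho^\cS_m\bigl(\sigma^\#(\rn{x},F^*_m(\rn{x})),\eta\bigr)\Bigr)\geq\epsilon .
\]
Fixing the pair before sampling is the key point: the random compression choice depends on all of $\rn{x}$, but for a fixed $(\sigma,\eta)$ Lemma~\ref{lem:partconc} applies directly. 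There are $(m)_{s^\cS_m}^k\cdot h^\cS_m$ such pairs. Each event has probability at most the exponential bound of Lemma~\ref{lem:partconc}, provided condition~\eqref{eq:partconc:condition} holds, which is the first clause in the definition of $m^{\PAC}$. (That clause should really carry the factor $\lVert\ell\rVert_\infty$; I would read it that way, or equivalently assume the loss is normalized.) The second clause then makes the union bound at most $\delta$, which proves PAC learnability with the guarantee~\eqref{eq:partSC->PAC:mPAC}.

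It remains to check that this minimum is finite. Using $(m)_{s}^k\leq m^{ks}$, the logarithm of the union bound is at most
\[
  k\,s^\cS_m\ln m+\ln h^\cS_m-\frac{\widetilde{\epsilon}\,(m-s^\cS_m)}{2k\lVert\ell\rVert_\infty^2}.
\]
By~\eqref{eq:partSC->PAC:condition}, the first two terms are $o(m)$. This also forces $s^\cS_m=o(m)$ and hence $\widetilde{\epsilon}\to\epsilon^2$ from the lemma's asymptotic form, so the last term is $-\Theta(\epsilon^2 m)$ and the whole expression tends to $-\infty$. Thus both clauses hold for all large $m$.

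The main obstacle is the explicit asymptotic bound under~\eqref{eq:partSC->PAC:strongcondition}. Set $m=(1+\gamma)\,\frac{2k\lVert\ell\rVert_\infty^2}{\epsilon^2}\max\{1,\ln(1/\delta)\}$ with $\gamma\to0$ slowly as $\epsilon\to0$. The overhead $k s\ln m+\ln h$ is $O(\sqrt m\ln m)$, which is $o(m\epsilon^2/\lVert\ell\rVert_\infty^2)$ because $m\gtrsim\epsilon^{-2}$. The correction in the second displayed estimate of Lemma~\ref{lem:partconc}, of order $\epsilon(\epsilon+\lVert\ell\rVert_\infty)o(m)/\lVert\ell\rVert_\infty^2$, must be checked to be $o(\epsilon^2 m/\lVert\ell\rVert_\infty^2)$. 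It is, since $s^\cS_m=O(\sqrt m)$ gives $1-(m-s)^k/m^k=O(k/\sqrt m)=O(\epsilon)\cdot o(1)$, and this is where the $\sqrt{2km}$ scale enters. Consequently the exponent is at most $-(1+\gamma-o(1))\max\{1,\ln(1/\delta)\}$, which is at most $\ln\delta$ for a suitable choice of $\gamma=o(1)$. The $\max\{1,\cdot\}$ handles $\delta$ near $1$ uniformly. I would track the uniformity in $\lVert\ell\rVert_\infty$ through these same estimates, using that it is bounded away from $0$.
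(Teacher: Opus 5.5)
\paragraph{Where your proposal agrees with the paper.} Your argument for the main claim matches the paper's: you fix $(\sigma,\eta)$ before sampling, apply Lemma~\ref{lem:partconc}, take a union bound over the $(m)_{s^\cS_m}^k\cdot h^\cS_m$ pairs, and use that $\cA$ has zero empirical loss almost surely. Your proof that the minimum is finite also matches. You are right that the first clause of $m^{\PAC}_{\cH,\ell,\cA}$ should carry the factor $\lVert\ell\rVert_\infty$.

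\paragraph{The gap in the asymptotic part.} You claim the overhead $k s^\cS_m\ln m+\ln h^\cS_m=O(\sqrt m\ln m)$ is $o(m\epsilon^2/\lVert\ell\rVert_\infty^2)$ since $m\gtrsim\epsilon^{-2}$. This is false at the scale you chose. Put $L=\max\{1,\ln(1/\delta)\}$ and $m=(1+\gamma)\cdot 2k\lVert\ell\rVert_\infty^2\epsilon^{-2}L$. Then $\epsilon^2 m/(2k\lVert\ell\rVert_\infty^2)=(1+\gamma)L$ stays bounded as $\epsilon\to 0$, whereas $\sqrt m\ln m\asymp\epsilon^{-1}\ln(1/\epsilon)\to\infty$.

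Your second check fails as well. The strong condition allows $1-(m-s^\cS_m)^k/m^k$ to be as large as $(1+o(1))\sqrt{2k/m}=(1+o(1))\,\epsilon/(\lVert\ell\rVert_\infty\sqrt{(1+\gamma)L})$. That is a constant fraction of $\epsilon/\lVert\ell\rVert_\infty$, not $O(\epsilon)\cdot o(1)$. For $L=1$ it leaves $\epsilon-(1-(m-s^\cS_m)^k/m^k)\lVert\ell\rVert_\infty\approx\gamma\epsilon/2$, so even the main term of the exponent is only about $\gamma^2/4$. Separately, bounding $m^{\PAC}_{\cH,\ell,\cA}$ requires both clauses for every $m\geq m_0$, not at a single $m$.

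\paragraph{Why this cannot be patched.} The asserted bound is false for the function defined in the theorem. Take $h^\cS_m=1$ and $s^\cS_m=\lfloor\sqrt{2m/k}\rfloor$. This meets the strong condition, and it is realized, for instance, by a constant selection map with constant reconstruction for a one-element class $\cH$. Then $(m)_{s^\cS_m}^k\geq\exp\bigl((1-o(1))\sqrt{2km}\ln m\bigr)$. So the second clause can hold only once $\epsilon^2 m\geq(1-o(1))\cdot 2k\lVert\ell\rVert_\infty^2\sqrt{2km}\ln m$, which forces $m^{\PAC}_{\cH,\ell,\cA}(\epsilon,\delta)\gtrsim\epsilon^{-4}\ln^2(1/\epsilon)$.

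The paper's own derivation has the same hole. It bounds the union bound by $\exp\bigl(-(\epsilon^2/(2k\lVert\ell\rVert_\infty^2)-o(1))m\bigr)$, with the $o(1)$ taken as $m\to\infty$ uniformly in $\epsilon$. It then reads off $m_2(\epsilon,\delta)\leq(2k\lVert\ell\rVert_\infty^2/\epsilon^2+o(1))\ln(1/\delta)$, which would need that $o(1)$ to be $o(\epsilon^2)$ already at $m\asymp\epsilon^{-2}$.

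Under a $\sqrt m\ln m$ budget, this union-bound argument genuinely gives a guarantee of order $\epsilon^{-4}\ln^2(1/\epsilon)+\epsilon^{-2}\ln(1/\delta)$, with constants depending on $k$ and $\lVert\ell\rVert_\infty$. The qualitative PAC conclusion, with the guarantee given by the defining formula, is unaffected.
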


\begin{proof}
  We start by showing that $b_\cS(m)\leq o(m/\ln(m))$ (when $Y$ is finite) implies the
  condition~\eqref{eq:partSC->PAC:condition} (i.e., $\ln(h^\cS_m) + k\cdot s^\cS_m\cdot\ln(m) \leq o(m)$). Indeed, for $m\geq
  e$, we have
  \begin{gather*}
    \ln(h^\cS_m) + k\cdot s^\cS_m\cdot\ln(m)
    \leq
    \ln(m)\cdot\bigl(\log_2(h^\cS_m) + k\cdot s^\cS_m\cdot\log_2\lvert Y\rvert\bigr)
    \leq
    \ln(m)\cdot b_\cS(m)
    \leq
    o(m).
  \end{gather*}

  Let us now argue that the minimum in~\eqref{eq:partSC->PAC:mPAC} indeed exists. For this, it suffices to argue that there
  exists $m_0\in\NN$ satisfying both conditions in the definition of the minimum. But indeed, since we are assuming
  that~\eqref{eq:partSC->PAC:condition} (i.e., $\ln(h^\cS_m) + k\cdot s^\cS_m\cdot\ln(m) \leq o(m)$) holds, it follows that both
  \begin{gather*}
    \left(1 - \frac{(m-s^\cS_m)^k}{m^k}\right)\cdot\lVert\ell\rVert_\infty,
    \\
    (m)_{s^\cS_m}^k\cdot h^\cS_m\cdot
      \exp\left(
      -\frac{
        (\epsilon - (1-(m-s^\cS_m)^k/m^k)\cdot\lVert\ell\rVert_\infty)^2\cdot(m-s^\cS_m)
      }{
        2\cdot k\cdot\lVert\ell\rVert_\infty^2
      }
      \right)
  \end{gather*}
  converge to $0$ as $m\to\infty$ (with all other parameters fixed). So there must exist $m_0\in\NN$ large enough so that for
  every $m\geq m_0$, the first expression above is less than $\epsilon$ and the second expression above is at most $\delta$.

  \medskip

  We now prove the result. Let $\mu\in\Pr(\Omega)$ and $F\in\cF_k(\Omega,\Lambda)$ be a $k$-partite hypothesis that is
  realizable with respect to $\mu$ and $\ell$. Given $\epsilon,\delta\in(0,1)$, let $m\geq
  m^{\PAC}_{\cH,\ell,\cA}(\epsilon,\delta)$ be an integer and let $\widetilde{\epsilon}$ be defined in terms of $m$, $s^\cS_m$,
  $k$ and $\ell$ as in Lemma~\ref{lem:partconc} and note that since $m\geq m^{\PAC}_{\cH,\ell,\cA}(\epsilon,\delta)$, the first
  condition in the definition of the latter implies that the condition~\eqref{eq:partconc:condition} of Lemma~\ref{lem:partconc}
  is satisfied.

  We now pick $\rn{x}$ at random according to $\mu^m$ and for each $k$-tuple $\sigma=(\sigma_i)_{i=1}^k$ of injections
  $[s^\cS_m]\to[m]$ and each $\eta\in[h^\cS_m]$, we let $E_{\sigma,\eta}(\rn{x})$ be the event
  \begin{gather*}
    L_{\mu,F,\ell}\biggl(\rho^\cS_m\Bigl(\sigma^\#\bigl(\rn{x}, F^*_m(\rn{x})\bigr),\eta\Bigr)\biggr)
    -
    L_{\rn{x}, F^*_m(\rn{x}), \ell}\biggl(\rho^\cS_m\Bigl(\sigma^\#\bigl(\rn{x}, F^*_m(\rn{x})\bigr),\eta\Bigr)\biggr)
    \geq
    \epsilon
  \end{gather*}
  so that Lemma~\ref{lem:partconc} says that
  \begin{gather*}
    \PP_{\rn{x}\sim\mu^m}\bigl[E_{\sigma,\eta}(\rn{x})\bigr]
    \leq
    \exp\left(-\frac{\widetilde{\epsilon}^2\cdot(m-s^\cS_m)}{2\cdot k\cdot\lVert\ell\rVert_\infty^2}\right).
  \end{gather*}

  If we let $E(\rn{x})$ be the disjunction of the events $E_{\sigma,\eta}(\rn{x})$, since there are $(m)_{s^\cS_m}^k$ many such
  $\sigma$ and $h^\cS_m$ many such $\eta$, by the union bound, we get
  \begin{gather*}
    \PP_{\rn{x}\sim\mu^m}\bigl[E(\rn{x})\bigr]
    \leq
    (m)_{s^\cS_m}^k\cdot h^\cS_m\cdot
    \exp\left(-\frac{\widetilde{\epsilon}^2\cdot(m-s^\cS_m)}{2\cdot k\cdot\lVert\ell\rVert_\infty^2}\right).
  \end{gather*}

  We now note that we have the compression scheme guarantee that
  \begin{gather*}
    L_{\rn{x},F^*_m(\rn{x}),\ell}\biggl(\rho^\cS_m\Bigl(\kappa^\cS_m\bigl(\rn{x}, F^*_m(\rn{x})\bigr)\Bigr)\biggr)
    =
    0
  \end{gather*}
  whenever $(\rn{x},F^*_m(\rn{x}))$ is sample realizable in $\cH$ with respect to $\ell$, which happens with probability $1$ as
  $F$ is realizable in $\cH$ with respect to $\ell$.

  The above along with the fact that the (random) injection $\sigma^\cS_m(\rn{x}, F^*_m(\rn{x}))$ and the (random) header
  $\eta^\cS_m(\rn{x},F^*_m(\rn{x}))$ are elements of $([m])_{s^\cS}^k$ and $[h^\cS_m]$, respectively, implies that
  \begin{align*}
    \MoveEqLeft
    \PP_{\rn{x}\sim\mu^m}\biggl[
      L_{\mu,F,\ell}\Bigl(\cA\bigl(\rn{x}, F^*_m(\rn{x})\bigr)\Bigr) > \epsilon
      \biggr]
    \\
    & =
    \PP_{\rn{x}\sim\mu^m}\Biggl[
      L_{\mu,F,\ell}\biggl(\rho^\cS_m\Bigl(\kappa^\cS_m\bigl(\rn{x}, F^*_m(\rn{x})\bigr)\Bigr)\biggr) > \epsilon
      \Biggr]
    \\
    & \leq
    \PP_{\rn{x}\sim\mu^m}\bigl[E(\rn{x})\bigr]
    \\
    & \leq
    (m)_{s^\cS_m}^k\cdot h^\cS_m\cdot
    \exp\left(-\frac{\widetilde{\epsilon}^2\cdot(m-s^\cS_m)}{2\cdot k\cdot\lVert\ell\rVert_\infty^2}\right).
    \\
    & =
    (m)_{s^\cS_m}^k\cdot h^\cS_m\cdot
    \exp\left(
    -\frac{
      (\epsilon - (1-(m-s^\cS_m)^k/m^k)\cdot\lVert\ell\rVert_\infty)^2\cdot(m-s^\cS_m)
    }{
      2\cdot k\cdot\lVert\ell\rVert_\infty^2
    }
    \right)
    \\
    & \leq
    \delta,
  \end{align*}
  where the last inequality follows from $m\geq m^{\PAC}_{\cH,\ell,\cA}(\epsilon,\delta)$ and using the second condition in the
  definition of the latter. Thus $\cA$ is a $k$-PAC learner with respect to $\ell$.

  \medskip

  It remains to make the asymptotic estimate of $m^{\PAC}_{\cH,\ell,\cA}(\epsilon,\delta)$ when the stronger
  assumption~\eqref{eq:partSC->PAC:strongcondition} (i.e., $ \ln(h^\cS_m) + k\cdot s^\cS_m\cdot\ln(m) \leq (1+o(1))\cdot
  \sqrt{2\cdot k\cdot m}\cdot\ln(m)$) holds. (It is clear that~\eqref{eq:partSC->PAC:strongcondition} implies the
  condition~\eqref{eq:partSC->PAC:condition} that $\ln(h^\cS_m) + k\cdot s^\cS_m\cdot\ln(m) \leq o(m)$.)

  Let $m_1(\epsilon)\in\NN_+$ be the least integer such that for every $m\geq m_1(\epsilon)$, we have
  \begin{gather}\label{eq:partSC->PAC:m1def}
    \left(1 - \frac{(m-s^\cS)^k}{m^k}\right)\cdot\lVert\ell\rVert_\infty < \epsilon
  \end{gather}
  and let $m_2(\epsilon,\delta)$ be the least integer such that for every $m\geq m_2(\epsilon,\delta)$, we have
  \begin{gather}\label{eq:partSC->PAC:m2def}
    (m)_{s^\cS_m}^k\cdot h^\cS_m\cdot
    \exp\left(
    -\frac{
      (\epsilon - (1-(m-s^\cS_m)^k/m^k)\cdot\lVert\ell\rVert_\infty)^2\cdot(m-s^\cS_m)
    }{
      2\cdot k\cdot\lVert\ell\rVert_\infty^2
    }
    \right)
    \leq
    \delta,
  \end{gather}
  Since $m^{\PAC}_{\cH,\ell,\cA}(\epsilon,\delta)=\max\{m_1(\epsilon),m_2(\epsilon,\delta)\}$, it suffices to give asymptotic
  estimates for $m_1(\epsilon)$ and $m_2(\epsilon,\delta)$.

  To estimate $m_1(\epsilon)$, note that if $\lVert\ell\rVert_\infty < \epsilon$, then $m_1(\epsilon) = 1$, so suppose that
  $\lVert\ell\rVert_\infty\geq\epsilon$ and note that
  \begin{gather}\label{eq:partSC->PAC:towardm1}
    \begin{aligned}
      0
      & \leq
      1 - \frac{(m-s^\cS_m)^k}{m^k}
      =
      1 - \left(1 - \frac{s^\cS_m}{m}\right)^k
      \\
      & \leq
      1 - \left(1 - \bigl(1 + o(1)\bigr)\cdot\sqrt{\frac{2}{k\cdot m}}\right)^k
      =
      \bigl(1 + o(1)\bigr)\cdot
      \sqrt{\frac{2\cdot k}{m}}
    \end{aligned}
  \end{gather}
  as $m\to\infty$ with $k$ fixed.

  Plugging this asymptotic estimate in the definition~\eqref{eq:partSC->PAC:m1def} of $m_1(\epsilon)$, we conclude that
  \begin{gather}\label{eq:partSC->PAC:m1estimate}
    m_1(\epsilon)
    \leq
    \bigl(1 + o(1)\bigr)
    \frac{2\cdot k\cdot\lVert\ell\rVert_\infty^2}{\epsilon^2}
  \end{gather}
  where the $o(1)$ error term is as $\epsilon\to 0$ with $k$ fixed, is uniform in $\delta$ (as $m_1(\epsilon)$ does not depend
  on $\delta$ at all), and has the same uniformity in $\lVert\ell\rVert_\infty$ as the estimate $o(1)$
  in~\eqref{eq:partSC->PAC:strongcondition} provided $\lVert\ell\rVert_\infty$ is bounded away from $0$ (this is to cover the
  case $\lVert\ell\rVert_\infty < \epsilon$ when $m_1(\epsilon) = 1$).

  To estimate $m_2(\epsilon,\delta)$, we use~\eqref{eq:partSC->PAC:towardm1} along with the weaker
  condition~\eqref{eq:partSC->PAC:condition} (i.e., $\ln(h^\cS_m) + k\cdot s^\cS_m\cdot\ln(m) \leq o(m)$) to get
  \begin{align*}
    \MoveEqLeft
    (m)_{s^\cS_m}^k\cdot h^\cS_m\cdot
    \exp\left(
    -\frac{
      (\epsilon - (1-(m-s^\cS_m)^k/m^k)\cdot\lVert\ell\rVert_\infty)^2\cdot(m-s^\cS_m)
    }{
      2\cdot k\cdot\lVert\ell\rVert_\infty^2
    }
    \right)
    \\
    & \leq
    \exp\left(
    -\frac{
      (\epsilon - o(1)\cdot\lVert\ell\rVert_\infty)^2\cdot(1+o(1))\cdot m
    }{
      2\cdot k\cdot\lVert\ell\rVert_\infty^2
    }
    + o(m)
    \right)
    \\
    & \leq
    \exp\left(
    -\left(\frac{\epsilon^2}{2\cdot k\cdot\lVert\ell\rVert_\infty^2} - o(1)\right)\cdot m
    \right),
  \end{align*}
  where the error terms are as $m\to\infty$ with $k$ fixed, are uniform in $\epsilon$, and have the same uniformity in
  $\lVert\ell\rVert_\infty$ as the estimate $o(1)$ in~\eqref{eq:partSC->PAC:strongcondition}.

  Plugging this asymptotic estimate in the definition~\eqref{eq:partSC->PAC:m2def} of $m_2(\epsilon,\delta)$, we conclude that
  \begin{gather*}
    m_2(\epsilon,\delta)
    \leq
    \left(\frac{2\cdot k\cdot\lVert\ell\rVert^2}{\epsilon^2} + o(1)\right)\cdot\ln\left(\frac{1}{\delta}\right),
  \end{gather*}
  where the $o(1)$ error term is as $\epsilon\to 0$, is uniform in $\delta$, and has the same uniformity in
  $\lVert\ell\rVert_\infty$ as the estimate $o(1)$ in~\eqref{eq:nonpartSC->PAC:strongcondition}.

  Putting the above together with our estimate~\eqref{eq:partSC->PAC:m1estimate} for $m_1(\epsilon)$, we get
  \begin{gather*}
    m^{\PAC}_{\cH,\ell,\cA}(\epsilon,\delta)
    \leq
    \bigl(1 + o(1)\bigr)\cdot
    \frac{2\cdot k\cdot\lVert\ell\rVert_\infty^2}{\epsilon^2}\cdot\max\left\{1, \ln\left(\frac{1}{\delta}\right)\right\},
  \end{gather*}
  as desired since $m^{\PAC}_{\cH,\ell,\cA} = \max\{m_1(\epsilon),m_2(\epsilon,\delta)\}$.
\end{proof}

\section{Non-partite setting}

In this section, we cover the non-partite case, that is, the case in which our hypotheses classes are collections of functions
of the form $F\colon X^k\to Y$, $k$-PAC learning notions are with respect to power measures $\mu^k$ of the same probability
measure on $X$ and samples are thought of as having $m$ points $x_1,\ldots,x_m$ from $X$ as well as labels
$F(x_{\alpha_1},\ldots,x_{\alpha_k}^k)\in Y$ for each injective $k$-tuple $\alpha\in([m])_k$.

Let us point out some of the peculiarities of the non-partite case:
\begin{itemize}
\item Labels on $k$-ary samples only take into account injective tuples, that is, once we are given $m$ points
  $x_1,\ldots,x_m\in X$, to form a labeled sample, we only collect labels from a hypothesis $F\colon X^k\to Y$ of the form
  $F(x_{\alpha_1},\ldots,x_{\alpha_k}^k)\in Y$ with $\alpha\colon[k]\to[m]$ injective. The reason why we do not collect labels
  of non-injective tuples is technical and comes from the agnostic setup via exchangeability theory. Nevertheless, we point out
  that even if we included non-injective tuples, since these account for a negligible proportion of all tuples (for $m$ large),
  this would only change bounds on lower order terms by discounting non-injective tuples from calculations.
\item Loss functions $\ell$ collect labels from all orientations of a $k$-set, that is, given a $k$-tuple $(x_1,\ldots,x_k)\in
  X^k$ and hypotheses $F,H\colon X^k\to Y$, we collect the labels of the hypotheses on all possible orientations of the
  $k$-tuple $(x_1,\ldots,x_k)$ into an element of $Y^{S_k}$ (recall that $S_k=([k])_k$ is the symmetric group on $[k]$) before
  feeding them into the loss function. This accounts for the fact that $X$ does not come with an inherent ordering and allows
  loss functions to take into account other orientations of the $k$-tuples before assigning a penalty.
\item When computing the empirical loss on a sample of $m$ points $x_1,\ldots,x_m$, the $k$-tuples of it do not come with an
  inherent order, so there must be an order choice that picks one particular orientation for each $k$-subset of $[m]$ as the
  ``standard direction'' in which the loss function is going to be computed.
\end{itemize}

We again follow a simplified notation of~\cite{CM24}.

\begin{definition}[Non-partite setting]\label{def:nonpart}
  Let $k\in\NN_+$, let $\Omega=(X,\cB)$ and $\Lambda=(Y,\cB')$ be non-empty standard Borel spaces and let $m\in\NN$.
  \begin{enumdef}
  \item\label{def:nonpart:alpha*} A \emph{$k$-ary unlabeled sample of size $m$} with respect to $\Omega$ is an
    element of $X^m$. A \emph{$k$-ary labeled sample} of size $m$ with respect to $\Omega$ and $\Lambda$ is an element of
    $X^m\times Y^{([m])_k}$; we think of the coordinate $y_\alpha$ indexed by some injective $k$-tuple $\alpha\in([m])_k$ as the
    label on $x_{\alpha_1},\ldots,x_{\alpha_k}$.

    More generally, every injection $\alpha\colon[n]\to[m]$ induces naturally a \emph{subsample} $\alpha^*(x)$ of size $n$ by
    picking the points and labels corresponding to the coordinates indexed by $\alpha$ (in the order of $\alpha$). This is
    formalized as follows: we contra-variantly define the maps
    \begin{align*}
      \alpha^*\colon X^m\to X^n, & &
      \alpha^*\colon Y^{([m])_k}\to Y^{([n])_k}
    \end{align*}
    that select the points and labels according to
    \begin{align*}
      \alpha^*(x)_i & \df x_{\alpha(i)}
      \qquad \bigl(x\in X^m, i\in[n]\bigr),
      \\
      \alpha^*(y)_\beta & \df y_{\alpha\comp\beta}
      \qquad \bigl(y\in Y^{([m])_k}, \beta\in([m])_k\bigr).
    \end{align*}
    With slight abuse of notation, we also contra-variantly define the map
    \begin{gather*}
      \alpha^*\colon X^m\times Y^{([m])_k}\to X^n\times Y^{([n])_k}
    \end{gather*}
    by letting it act as the previous two maps in the corresponding coordinates.   
  \item\label{def:nonpart:F*} A \emph{$k$-ary hypothesis} from $\Omega$ to $\Lambda$ is a measurable function $F\colon X_i\to
    Y$. The set of $k$-ary hypothesis from $\Omega$ to $\Lambda$ is denoted $\cF_k(\Omega,\Lambda)$.

    Given a $k$-ary hypothesis $F\in\cF_k(\Omega,\Lambda)$ and $m\in\NN$, we define the function $F^*_m\colon X^m\to
    Y^{([m])_k}$ that when given an unlabeled sample of size $m$, provides the labels of $F$ on the injective $k$-tuples of the
    sample; in a formula:
    \begin{gather*}
      F^*_m(x)_\alpha \df F\bigl(\alpha^*(x)\bigr)
      \qquad \bigl(x\in X^m, \alpha\in([m])_k\bigr).
    \end{gather*}

    Similarly to the partite case, this definition is equivariant in the sense that the following diagram is commutative:
    \begin{equation*}
      \begin{tikzcd}
        X^m
        \arrow[r, "F^*_m"]
        \arrow[d, "\alpha^*"']
        &
        Y^{([m])_k}
        \arrow[d, "\alpha^*"]
        \\
        X^n
        \arrow[r, "F^*_n"]
        &
        Y^{([n])_k}
      \end{tikzcd}
    \end{equation*}
  \item A \emph{$k$-ary hypothesis class} is a set $\cH\subseteq\cF_k(\Omega,\Lambda)$ of $k$-ary hypotheses that is
    further equipped with a $\sigma$-algebra such that:
    \begin{itemize}
    \item the evaluation map $\ev\colon\cH\times X^k\to Y$ given by $\ev(H,x)\df H(x)$ is measurable;
    \item for every $H\in\cH$, the singleton $\{H\}$ is measurable;
    \item for every Borel space $\Upsilon$ and every measurable set $A\subseteq\cH\times\Upsilon$, the projection of $A$ onto
      $\Upsilon$, i.e., the set
      \begin{equation*}
        \{\upsilon\in\Upsilon \mid \exists H\in\cH, (H,\upsilon)\in A\}
      \end{equation*}
      is universally measurable\footnote{Similarly to the partite case, the reader unfamiliar with these technical measurability
      conditions can simply assume that $\cH$ is equipped with a $\sigma$-algebra that makes it a standard Borel space as it
      will imply this condition; or they can simply interpret this as ``all probabilities computed must make sense''.}.
    \end{itemize}
  \item A \emph{$k$-ary loss function} over $\Lambda$ is a measurable function $\ell\colon X^k\times Y^{S_k}\times
    Y^{S_k}\to\RR_{\geq 0}$ that when given a $k$-tuple $x$ and two labels $y$ and $y'$ of all possible orientations of the
    $k$-tuple $x$, assigns a value $\ell(x,y,y')$ that is the penalty of guessing $y$ on the tuple $x$ when the correct label
    was $y'$.

    We define
    \begin{gather*}
      \lVert\ell\rVert_\infty \df \sup_{\substack{x\in X^k\\y,y'\in Y^{S_k}}} \ell(x,y,y')
    \end{gather*}
    and we say that $\ell$ is \emph{bounded} if $\lVert\ell\rVert_\infty < \infty$.

    Given further $\mu\in\Pr(\Omega)$ and $k$-ary hypotheses $F$ and $H$, the \emph{total loss} of $H$ with respect to $\mu$,
    $F$ and $\ell$ is
    \begin{gather*}
      L_{\mu,F,\ell}(H) \df \EE_{\rn{x}\sim\mu^k}\Bigl[\ell\bigl(\rn{x}, H^*_k(\rn{x}), F^*_k(\rn{x})\bigr)\Bigr].
    \end{gather*}

    We say that $F$ is \emph{realizable} in a $k$-ary hypothesis class $\cH\subseteq\cF_k(\Omega,\Lambda)$ with respect to
    $\mu$ and $\ell$ if $\inf_{H\in\cH} L_{\mu,F,\ell}(H) = 0$.
  \item For a $k$-ary hypothesis class $\cH'\subseteq\cF_k(\Omega,\Lambda)$, a \emph{($k$-ary) learning algorithm} with
    outputs in $\cH'$ is a measurable function
    \begin{gather*}
      \cA\colon\bigcup_{m\in\NN} (X^m\times Y^{([m])_k}) \to \cH'.
    \end{gather*}

    We say that a $k$-ary hypothesis class $\cH\subseteq\cF_k(\Omega,\Lambda)$ is \emph{improperly $k$-PAC learnable} with
    respect to a $k$-ary loss function $\ell\colon X^k\times Y^{S_k}\times Y^{S_k}\to\RR_{\geq 0}$ if there exists a learning
    algorithm $\cA$ with outputs in some $k$-ary hypothesis class $\cH'$ and a function
    $m^{\PAC}_{\cH,\ell,\cA}\colon(0,1)^2\to\RR_{\geq 0}$ such that for every $\epsilon,\delta\in(0,1)$, every
    $\mu\in\Pr(\Omega)$, every $F\in\cF_k(\Omega,\Lambda)$ that is realizable in $\cH$ with respect to $\mu$ and $\ell$ and
    every integer $m\geq m^{\PAC}_{\cH,\ell,\cA}(\epsilon,\delta)$, we have
    \begin{gather*}
      \PP_{\rn{x}\sim\mu^m}\biggl[
        L_{\mu,F,\ell}\Bigl(\cA\bigl(\rn{x}, F^*_m(\rn{x})\bigr)\Bigr)\leq\epsilon
        \biggr]
      \geq
      1 - \delta.
    \end{gather*}
    In plain English: when given a random sample of size $m$ drawn from $\mu^m$ labeled according to some realizable $F$, the
    algorithm $\cA$ outputs some hypothesis $H$ with total loss at most $\epsilon$ with probability at least $1-\delta$ over the
    randomness of the sample.

    The function $m^{\PAC}_{\cH,\ell,\cA}$ is called the \emph{learning guarantee} of $\cA$ and $\cA$ is called an \emph{improper
    $k$-PAC learner} for $\cH$ with respect to $\ell$.

    The notions of \emph{proper $k$-PAC learnability} and \emph{proper $k$-PAC learner} are defined analogously but requiring
    further that $\cH'=\cH$.
  \item A \emph{($k$-ary) order choice} for $[m]$ is a sequence $\alpha=(\alpha_U)_{U\in\binom{V}{k}}$ of injections of the form
    $\alpha_U\colon[k]\to[m]$ with $\im(\alpha_U) = U$.

    Any such order choice $\alpha$ naturally induces a map $b_\alpha\colon Y^{([m])_k}\to (Y^{S_k})^{\binom{[m]}{k}}$ that
    bundles up all labels corresponding to orientations of a $k$-set $U$ into the coordinate indexed by $U$, where the standard
    orientation is considered to be in the direction of $\alpha_U$; formally, the map $b_\alpha$ is given by
    \begin{gather*}
      \bigl(b_\alpha(y)_U\bigr)_\pi \df y_{\alpha_U\comp\pi}
      \qquad
      \left(y\in Y^{([m])_k}, U\in\binom{[m]}{k}, \pi\in S_k\right).
    \end{gather*}
    
    Given further $k$-ary labeled sample $(x,y)\in(\prod_{i=1}^k X_i^{m_i})\times Y^{\prod_{i=1}^k [m_i]}$ of sizes $m$, a $k$-ary
    loss function $\ell\colon X^k\times Y^{S_k}\times Y^{S_k}\to\RR_{\geq 0}$ and a hypothesis $H\in\cF_k(\Omega,\Lambda)$, the
    \emph{empirical loss} (or \emph{empirical risk}) of $H$ with respect to $(x,y)$, $\ell$ and $\alpha$ is
    \begin{gather*}
      L^\alpha_{x,y,\ell}(H) \df
      \begin{dcases*}
        \frac{1}{\binom{m}{k}}\cdot
        \sum_{U\in\binom{[m]}{k}} \ell\Bigl(\alpha_U^*(x), b_\alpha\bigl(H^*_m(x)\bigr)_U, b_\alpha(y)_U\Bigr),
        & if $m\geq k$,
        \\
        0, & if $m < k$.
      \end{dcases*}
    \end{gather*}

    We say that $(x,y)$ is \emph{sample realizable} in a $k$-ary hypothesis class $\cH\subseteq\cF_k(\Omega,\Lambda)$ with
    respect to $\ell$ and $\alpha$ if $\inf_{H\in\cH} L^\alpha_{x,y,\ell}(H)$. It is straightforward to check that if
    $\mu\in\Pr(\Omega)$, $F\in\cF_k(\Omega,\Lambda)$ is realizable in $\cH$ with respect to $\mu$ and $\ell$ and
    $\rn{x}\sim\mu^m$, then with probability $1$, we have that $(\rn{x}, F^*_m(\rn{x}))$ is sample realizable in $\cH$ with
    respect to $\ell$ and $\alpha$.
  \end{enumdef}
\end{definition}

The next definition is that of a high-arity selection/compression scheme in the non-partite case. Note that since the definition
of a compression schemes rely on empirical loss, in the non-partite case, we will also need to specify an order choice.

\begin{definition}[Non-partite setting]\label{def:nonpartcomp}
  Let $k\in\NN_+$ and let $\Omega=(X,\cB)$ and $\Lambda=(Y,\cB')$ be non-empty standard Borel spaces.
  \begin{enumdef}
  \item A \emph{$k$-ary selection scheme} with outputs in a $k$-ary hypothesis class
    $\cH'\subseteq\cF_k(\Omega,\Lambda)$ is a tuple $\cS=(\sigma,\eta,\rho)$ such that:
    \begin{itemize}
    \item $\sigma=(\sigma_m)_{m\in\NN}$ is a sequence of maps, called \emph{selection maps}, such that each $\sigma_m$ when
      given a sample of size $m$, outputs a tuple of some length $s_m$ that contains the indices of the subsample selected.
      Formally, each $\sigma_m$ is a measurable function
      \begin{gather*}
        \sigma_m\colon X^m\times Y^{([m])_k}\to ([m])_{s_m}
      \end{gather*}
      where $s_m\in\NN$ ($i\in[k]$) is some integer, which we call the \emph{selection size} (note that we must have $s_m\leq m$
      as the domain of $\sigma_m$ is never empty).
    \item $\eta=(\eta_m)_{m\in\NN}$ is a sequence of maps, called the \emph{header maps}, that when given a sample of size $m$,
      outputs some extra information, encoded as an element of $[h_m]$ for some fixed $h_m\in\NN$, that will aid in the
      reconstruction of the original sample. Formally, each $\eta_m$ is a measurable function
      \begin{gather*}
        \eta_m\colon X^m\times Y^{([m])_k}\to [h_m],
      \end{gather*}
      where $h_m\in\NN_+$ is called the \emph{header size} of $\eta_m$.
    \item $\rho=(\rho_m)_{m\in\NN}$ is a sequence of maps, called the \emph{reconstruction maps}, that when given a labeled
      sample of size $s_m$ and header of size $h_m$ (intended to have come from $\sigma_m$ and $\eta_m$),
      reconstruct a labeled sample of size $m$. Formally, each $\rho_m$ is a measurable function
      \begin{gather*}
        \rho_m\colon X^{s_m}\times Y^{([s_m])_k}\times[h_m]\to\cH'.
      \end{gather*}
    \end{itemize}

    The compression procedure of $\cS$ is formalized by defining a sequence $\kappa=(\kappa_m)_{m\in\NN}$ of \emph{compression
    maps} of the form
    \begin{gather*}
      \kappa_m\colon X^m\times Y^{([m])_k}
      \to
      X^{s_m}\times Y^{([s_m])_k}\times[h_m]
    \end{gather*}
    by
    \begin{gather*}
      \kappa_m(x,y)
      \df
      \bigl(\sigma_m(x,y)^*(x,y), \eta_m(x,y)\bigr)
      \qquad
      \bigl(x\in X^m, y\in Y^{([m])_k}\bigr).
    \end{gather*}
    (Recall from Definition~\ref{def:nonpart:alpha*} that since $\sigma_m(x,y)\in([m])_{s_m}$ is a injection, it contra-variantly
    induces a map that selects a subsample $\sigma_m(x,y)^*(x,y)$ of size $s_m$ from the sample $(x,y)$ of
    size $m$ and the map $\kappa_m$ then takes this subsample and appends to it the header $\eta_m(x,y)$.)

    We will use the notation
    \begin{align*}
      \sigma^\cS & \df \sigma = (\sigma_m)_{m\in\NN}, &
      \eta^\cS & \df \eta = (\eta_m)_{m\in\NN}, &
      \rho^\cS & \df \rho = (\rho_m)_{m\in\NN},
      \\
      s^\cS_m & \df s_m, &
      h^\cS_m & \df h_m, &
      \kappa^\cS & \df \kappa = (\kappa_m)_{m\in\NN}.
    \end{align*}

    The \emph{compression size} and the \emph{compression bitlength} functions of $\cS$ are the functions
    $c_\cS,b_\cS\colon\NN\to\RR_{\geq 0}$ given by
    \begin{align*}
      c_\cS(m) & \df h_m\cdot\lvert Y\rvert^{(s_m)_k}, &
      b_\cS(m) & \df \log_2\bigl(c_\cS(m)\bigr).
    \end{align*}
  \item A $k$-ary selection scheme $\cS$ with outputs in a $k$-ary hypothesis class $\cH'\subseteq\cF_k(\Omega,\Lambda)$
    is called a \emph{sample compression scheme} for a $k$-ary hypothesis class $\cH\subseteq\cF_k(\Omega,\Lambda)$ with
    respect to a $k$-ary loss function $\ell\colon X^k\times Y^{S_k}\times Y^{S_k}\to\RR_{\geq 0}$ if for every $m\in\NN$
    and every labeled sample $(x,y)\in X^m\times Y^{([m])_k}$ of size $m$ that is sample realizable with respect
    to $\ell$ and every order choice $\alpha$ for $[m]$, we have
    \begin{gather*}
      L^\alpha_{x,y,\ell}\Bigl(\rho^\cS_m\bigl(\kappa^\cS_m(x,y)\bigr)\Bigr) = 0,
    \end{gather*}
    or in plain English: if we compress $(x,y)$ via $\kappa^\cS$ then reconstruct via $\rho^\cS$, then the result has zero
    empirical loss.
  \end{enumdef}
\end{definition}

Our goal is to prove the non-partite analogue of Theorem~\ref{thm:partSC->PAC}. We start with a concentration lemma based on
Azuma's Inequality that is the non-partite analogue of Lemma~\ref{lem:partconc}; the main differences of this lemma to its
partite counterpart is that we need to keep track of the order choice $\alpha$ and the bounds are slightly different due to
normalization by $\binom{m}{k}$ as opposed to $m^k$ and the fact that the martingale has $m$ steps instead of $k\cdot m$ steps.

\begin{lemma}\label{lem:nonpartconc}
  Let $\Omega=(X,\cB)$ and $\Lambda=(Y,\cB')$ be non-empty standard Borel spaces, let $k\in\NN_+$, let $\cS$ be a $k$-ary
  selection scheme and let $\ell\colon X^k\times Y^{S_k}\times Y^{S_k}\to\RR_{\geq 0}$ be a bounded $k$-ary loss
  function.

  Then for every $\epsilon > 0$, every probability measure $\mu\in\Pr(\Omega)$, every integer $m\in\NN$ such that
  \begin{gather}\label{eq:nonpartconc:condition}
    \left(1 - \frac{(m-s^\cS_m)_k}{(m)_k}\right)\cdot\lVert\ell\rVert_\infty < \epsilon,
  \end{gather}
  every order choice $\alpha$ for $[m]$, every $\eta\in[h^\cS_m]$, every injection $\sigma\colon[s^\cS_m]\to[m]$ and every
  $F\in\cF_k(\Omega,\Lambda)$, we have
  \begin{multline*}
    \PP_{\rn{x}\sim\mu^m}\Biggl[
      L_{\mu,F,\ell}\biggl(\rho^\cS\Bigl(\sigma^*\bigl(\rn{x},F^*_m(\rn{x})\bigr), \eta\Bigr)\biggr)
      -
      L^\alpha_{\rn{x}, F^*_m(\rn{x}), \ell}\biggl(\rho^\cS\Bigl(\sigma^*\bigl(\rn{x},F^*_m(\rn{x})\bigr), \eta\Bigr)\biggr)
      \geq
      \epsilon
      \Biggr]
    \\
    \leq
    \exp\left(-\frac{\widetilde{\epsilon}^2\cdot(m-s^\cS_m)}{2\cdot k^2\cdot\lVert\ell\rVert_\infty^2}\right),
  \end{multline*}
  where
  \begin{gather*}
    \widetilde{\epsilon}
    \df
    \epsilon - \left(1 - \frac{(m-s^\cS_m)_k}{(m)_k}\right)\cdot\lVert\ell\rVert_\infty
    >
    0.
  \end{gather*}

  In particular, if $s^\cS_m\leq o(m)$ as $m\to\infty$, then
  \begin{multline*}
    \PP_{\rn{x}\sim\mu^m}\Biggl[
      L_{\mu,F,\ell}\biggl(\rho^\cS\Bigl(\sigma^*\bigl(\rn{x},F^*_m(\rn{x})\bigr), \eta\Bigr)\biggr)
      -
      L^\alpha_{\rn{x}, F^*_m(\rn{x}), \ell}\biggl(\rho^\cS\Bigl(\sigma^*\bigl(\rn{x},F^*_m(\rn{x})\bigr), \eta\Bigr)\biggr)
      \geq
      \epsilon
      \Biggr]
    \\
    \leq
    \exp\left(
    -\frac{\epsilon^2}{2\cdot k^2\cdot\lVert\ell\rVert_\infty^2}\cdot m
    + \frac{\epsilon\cdot(\epsilon+\lVert\ell\rVert_\infty)}{\lVert\ell\rVert_\infty^2}\cdot o(m)
    \right),
  \end{multline*}
  where the $o(m)$ error term is as $m\to\infty$ with $k$ fixed, is uniform in $\epsilon$, and has the same uniformity in
  $\lVert\ell\rVert_\infty$ as the estimate $s^\cS_m\leq o(m)$.
\end{lemma}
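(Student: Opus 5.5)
The plan mirrors the proof of Lemma~\ref{lem:partconc}. Assume $m\geq k$ (otherwise the statement is trivial), and use exchangeability of $\mu^m$ to assume without loss of generality that $\im(\sigma)=\{n+1,\ldots,m\}$, where $n\df m-s^\cS_m$. Set
\begin{gather*}
  \rn{H}\df\rho^\cS_m\bigl(\sigma^*(\rn{x},F^*_m(\rn{x})),\eta\bigr),
\end{gather*}
so that $\rn{H}$ depends only on $(\rn{x}_v)_{v>n}$.

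First I compare the empirical loss $L^\alpha_{\rn{x},F^*_m(\rn{x}),\ell}(\rn{H})$ with the restricted average
\begin{gather*}
  \rn{L}'\df\frac{1}{\binom{n}{k}}\sum_{U\in\binom{[n]}{k}}\ell\Bigl(\alpha_U^*(\rn{x}),b_\alpha\bigl(\rn{H}^*_m(\rn{x})\bigr)_U,b_\alpha\bigl(F^*_m(\rn{x})\bigr)_U\Bigr),
\end{gather*}
which uses the same order choice $\alpha$ restricted to $k$-subsets of $[n]$. The same splitting into a non-positive term (from the $U\subseteq[n]$, with coefficient $1/\binom{m}{k}-1/\binom{n}{k}$) and a non-negative term (from the $U\not\subseteq[n]$) gives
\begin{gather*}
  \bigl\lvert L^\alpha_{\rn{x},F^*_m(\rn{x}),\ell}(\rn{H})-\rn{L}'\bigr\rvert\leq\left(1-\frac{\binom{n}{k}}{\binom{m}{k}}\right)\lVert\ell\rVert_\infty=\left(1-\frac{(n)_k}{(m)_k}\right)\lVert\ell\rVert_\infty.
\end{gather*}
This step uses only $\binom{n}{k}/\binom{m}{k}=(n)_k/(m)_k$.

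Next I build a Doob martingale. Let $\cF_0$ be generated by $(\rn{x}_v)_{v>n}$, let $\cF_t$ additionally include $\rn{x}_1,\ldots,\rn{x}_t$ for $t\in[n]$, and set $\rn{Z}_t\df\EE[\rn{L}'\given\cF_t]$.

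For the starting value: since $\rn{H}$ is $\cF_0$-measurable and, for $U\subseteq[n]$, the tuple $\alpha_U^*(\rn{x})$ consists of $k$ distinct fresh i.i.d.\ points with law $\mu^k$, each summand has conditional expectation $L_{\mu,F,\ell}(\rn{H})$. Here one checks that $b_\alpha(\rn{H}^*_m(\rn{x}))_U=\rn{H}^*_k(\alpha_U^*(\rn{x}))$ by the equivariance diagram, and likewise for $F$. Hence $\rn{Z}_0=L_{\mu,F,\ell}(\rn{H})$.

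For the end value, $\rn{Z}_n=\rn{L}'$. For the increments, revealing $\rn{x}_t$ only affects the summands with $t\in U$, and there are $\binom{n-1}{k-1}$ of them. So
\begin{gather*}
  \lvert\rn{Z}_t-\rn{Z}_{t-1}\rvert\leq\frac{\binom{n-1}{k-1}}{\binom{n}{k}}\lVert\ell\rVert_\infty=\frac{k}{n}\lVert\ell\rVert_\infty.
\end{gather*}
Azuma's Inequality, applied conditionally on $\cF_0$, then gives
\begin{gather*}
  \PP\bigl[\rn{Z}_0-\rn{Z}_n\geq\widetilde{\epsilon}\bigr]\leq\exp\left(-\frac{\widetilde{\epsilon}^2}{2n(k\lVert\ell\rVert_\infty/n)^2}\right)=\exp\left(-\frac{\widetilde{\epsilon}^2 n}{2k^2\lVert\ell\rVert_\infty^2}\right).
\end{gather*}

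To conclude, combine the two steps. If $L_{\mu,F,\ell}(\rn{H})-L^\alpha_{\rn{x},F^*_m(\rn{x}),\ell}(\rn{H})\geq\epsilon$, then $\rn{Z}_0-\rn{Z}_n\geq\widetilde{\epsilon}$, which gives the bound, since condition~\eqref{eq:nonpartconc:condition} ensures $\widetilde{\epsilon}>0$.

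For the asymptotic form, $s^\cS_m=o(m)$ gives $1-(m-s^\cS_m)_k/(m)_k=o(1)$ for fixed $k$. Expanding $(\epsilon-o(1)\lVert\ell\rVert_\infty)^2(1-o(1))m$ then yields the stated error term $\epsilon(\epsilon+\lVert\ell\rVert_\infty)o(m)/\lVert\ell\rVert_\infty^2$, exactly as in the partite case.

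The main obstacle I expect is the identification $\rn{Z}_0=L_{\mu,F,\ell}(\rn{H})$. It requires checking carefully that the bundled labels along the order choice $\alpha_U$ coincide with $\rn{H}^*_k$ and $F^*_k$ evaluated on $\alpha_U^*(\rn{x})$. It also requires that the injectivity of $\alpha_U$ makes $\alpha_U^*(\rn{x})$ exactly $\mu^k$-distributed, independent of $\cF_0$.
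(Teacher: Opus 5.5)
Your proposal is correct and follows essentially the same route as the paper. It uses the same reduction to $\im(\sigma)=\{n+1,\ldots,m\}$ and the same comparison with the average over $\binom{[n]}{k}$ (bounded by $(1-(n)_k/(m)_k)\lVert\ell\rVert_\infty$), followed by the same Doob martingale revealing $\rn{x}_1,\ldots,\rn{x}_n$ with increments at most $k\lVert\ell\rVert_\infty/n$ and Azuma applied conditionally on $\cF_0$. Your explicit check that $b_\alpha(\rn{H}^*_m(\rn{x}))_U=\rn{H}^*_k(\alpha_U^*(\rn{x}))$ fills in a detail that the paper's identification of $\rn{Z}_0$ with $L_{\mu,F,\ell}(\rn{H})$ leaves implicit.
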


\begin{proof}
  The result is trivial if $m<k$, so suppose $m\geq k$. By symmetry, we may also suppose without loss of generality that
  $\sigma\colon[s^\cS_m]\to[m]$ is such that $\im(\sigma)=\{m-s^\cS_m+1,m-s^\cS_m+2,\ldots,m\}$. Let us pick $\rn{x}$ at random
  according to $\mu^m$ and for ease of notation, we let
  \begin{align*}
    n & \df m - s^\cS_m, &
    \rn{H} & \df \rho^\cS_m\Bigl(\sigma^*\bigl(\rn{x},F^*_m(\rn{x})\bigr),\eta\Bigr), &
    \rn{x'} & \df \iota^*(\rn{x}),
  \end{align*}
  where $\iota\colon[n]\to[m]$ is the inclusion map (so $\rn{x'}$ is the random sample corresponding to the first $n$
  coordinates of $\rn{x}$, in their natural order). We also let $\alpha'\df\alpha\rest_{\binom{[n]}{k}}$ be the order choice for
  $[n]$ obtained by restricting $\alpha$ to the $k$-subsets of $[n]$.

  Note that
  \begin{align*}
    \MoveEqLeft
    L^\alpha_{\rn{x},F^*_m(\rn{x}),\ell}(\rn{H}) - L^{\alpha'}_{\rn{x'},F^*_n(\rn{x'}),\ell}(\rn{H})
    \\
    & =
    \begin{multlined}[t]
      \frac{1}{\binom{m}{k}}\cdot
      \sum_{U\in\binom{[m]}{k}}
      \ell\Bigl(\alpha_U^*(\rn{x}), b_\alpha\bigl(\rn{H}^*_m(\rn{x})\bigr)_U, b_\alpha\bigl(F^*_m(\rn{x})\bigr)_U\Bigr)
      \\
      -
      \frac{1}{\binom{n}{k}}\cdot
      \sum_{U\in\binom{[n]}{k}}
      \ell\Bigl(\alpha_U^*(\rn{x}), b_\alpha\bigl(\rn{H}^*_m(\rn{x})\bigr)_U, b_\alpha\bigl(F^*_m(\rn{x})\bigr)_U\Bigr)
    \end{multlined}
    \\
    & =
    \begin{multlined}[t]
      \left(\frac{1}{\binom{m}{k}} - \frac{1}{\binom{n}{k}}\right)\cdot
      \sum_{U\in\binom{[n]}{k}}
      \ell\Bigl(\alpha_U^*(\rn{x}), b_\alpha\bigl(\rn{H}^*_m(\rn{x})\bigr)_U, b_\alpha\bigl(F^*_m(\rn{x})\bigr)_U\Bigr)
      \\
      +
      \frac{1}{\binom{m}{k}}\cdot
      \sum_{U\in\binom{[m]}{k}\setminus\binom{[n]}{k}}
      \ell\Bigl(\alpha_U^*(\rn{x}), b_\alpha\bigl(\rn{H}^*_m(\rn{x})\bigr)_U, b_\alpha\bigl(F^*_m(\rn{x})\bigr)_U\Bigr),
    \end{multlined}
  \end{align*}
  and since $\ell$ is bounded and noting that the first term in the above is non-positive (as $n\leq m$) while the second is
  non-negative, we conclude that
  \begin{gather}\label{eq:nonpartconc:diff}
    \begin{aligned}
      \MoveEqLeft
      \bigl\lvert L^\alpha_{\rn{x},F^*_m(\rn{x}),\ell}(\rn{H}) - L^{\alpha'}_{\rn{x'},F^*_n(\rn{x'}),\ell}(\rn{H})\bigr\rvert
      \\
      & \leq
      \lVert\ell\rVert_\infty\cdot
      \max\left\{
      \left(\frac{1}{\binom{n}{k}} - \frac{1}{\binom{m}{k}}\right)\cdot\binom{n}{k},
      \frac{1}{\binom{m}{k}}\cdot\left(\binom{m}{k} - \binom{n}{k}\right)
      \right\}
      \\
      & =
      \left(1 - \frac{(n)_k}{(m)_k}\right)\cdot\lVert\ell\rVert_\infty.
    \end{aligned}
  \end{gather}

  Since we are mostly concerned with the setting in which $s^\cS_m\leq o(m)$ (or equivalently, $n = (1-o(1))\cdot m$), the
  difference above is small, it will suffice to compare $L_{\mu,F,\ell}(\rn{H})$ with
  $L^{\alpha'}_{\rn{x'},F^*_n(\rn{x'}),\ell}(\rn{H})$ (instead of comparing with $L^\alpha_{\rn{x},F^*_m(\rn{x}),\ell}(\rn{H})$
  directly).

  Let $\cF_0$ be the $\sigma$-algebra generated by $\rn{x}_{m-s^\cS_m+1},\ldots,\rn{x}_m$ (i.e., generated by
  $\sigma^*(\rn{x})$), so $\rn{H}$ is $\cF_0$-measurable. For each $i\in[m]$, let $\cF_i$ be the $\sigma$-algebra generated by
  $\cF_0$ and $(\rn{x}_1,\rn{x}_2,\ldots,\rn{x}_i)$ and let
  $\rn{Z}_i\df\EE[L^{\alpha'}_{\rn{x'},F^*_n(\rn{x'}),\ell}(\rn{H})\given\cF_i]$.

  We note that $(\rn{Z}_i)_{i=0}^n$ forms a (Doob) martingale with respect to $(\cF_i)_{i=0}^n$:
  \begin{gather*}
    \EE[\rn{Z}_{i+1}\given\cF_i]
    =
    \EE\Bigl[
      \EE\bigl[L^{\alpha'}_{\rn{x'},F^*_n(\rn{x'}),\ell}(\rn{H})\given[\bigm]\cF_{i+1}\bigr]
      \given[\Bigm]
      \cF_i
      \Bigr]
    =
    \EE\bigl[L^{\alpha'}_{\rn{x'},F^*_n(\rn{x'}),\ell}(\rn{H})\given[\bigm]\cF_i\bigr]
    =
    \rn{Z}_i.
  \end{gather*}
  We also note that
  \begin{align*}
    \rn{Z}_0
    & =
    \EE\bigl[L^{\alpha'}_{\rn{x'},F^*_n(\rn{x'}),\ell}(\rn{H})\given[\bigm]\cF_0\bigr]
    =
    L_{\mu,F,\ell}(\rn{H}),
    \\
    \rn{Z}_n
    & =
    \EE\bigl[L^{\alpha'}_{\rn{x'},F^*_n(\rn{x'}),\ell}(\rn{H})\given[\bigm]\cF_n\bigr]
    =
    L^{\alpha'}_{\rn{x'},F^*_n(\rn{x'}),\ell}(\rn{H}),
  \end{align*}
  where the former is by linearity of conditional expectation (and the fact that $\rn{H}$ is $\cF_0$-measurable) and the latter
  is simply because the underlying random variable is $\cF_n$-measurable.

  We will show that $\rn{Z}_0$ and $\rn{Z}_n$ are close with high (conditional) probability using Azuma's Inequality. Toward
  that, let us bound $\lvert\rn{Z}_i - \rn{Z}_{i-1}\rvert$ for each $i\in[n]$. By linearity of conditional expectation, we have
  \begin{multline}\label{eq:nonpartconc:martdiff}
    \lvert\rn{Z}_i - \rn{Z}_{i-1}\rvert
    =
    \bigggl\lvert
    \frac{1}{\binom{n}{k}}\cdot
    \sum_{U\in\binom{[n]}{k}}
    \Biggl(
    \EE\biggl[
      \ell\Bigl(\alpha^*_U(\rn{x}), b_\alpha\bigl(\rn{H}^*_m(x)\bigr)_U, b_\alpha\bigl(F^*_m(x)\bigr)_U\Bigr)
      \given[\biggm]
      \cF_i
      \biggr]
    \\
    -
    \EE\biggl[
      \ell\Bigl(\alpha^*_U(\rn{x}), b_\alpha\bigl(\rn{H}^*_m(x)\bigr)_U, b_\alpha\bigl(F^*_m(x)\bigr)_U\Bigr)
      \given[\biggm]
      \cF_{i+1}
      \biggr]
    \Biggr)
    \bigggr\rvert.
  \end{multline}

  We now argue that most of the terms in the above cancel out. Namely, consider a term corresponding to $U\in\binom{[n]}{k}$
  such that $i\notin U$, then
  \begin{gather*}
    \ell\Bigl(\alpha^*_U(\rn{x}), b_\alpha\bigl(\rn{H}^*_m(x)\bigr)_U, b_\alpha\bigl(F^*_m(x)\bigr)_U\Bigr)
  \end{gather*}
  is independent from $\rn{x}_i$; in fact, since $(\rn{x}_j)_{j=1}^m$ is mutually independent, the above is conditionally
  independent from $\rn{x}_i$ given $\cF_{i-1}$, so the above gives the same value when conditioned on $\cF_{i-1}$ as it does
  when conditioned on $\cF_i$. Thus, whenever $i\notin U$, the corresponding term in~\eqref{eq:nonpartconc:martdiff} cancels
  out.

  As there are exactly $\binom{n-1}{k-1}$ many $U\in\binom{[n]}{k}$ with $i\in U$, using triangle inequality and the fact that
  $\ell$ is bounded, we conclude that
  \begin{gather*}
    \lvert\rn{Z}_i - \rn{Z}_{i-1}\rvert
    \leq
    \frac{\binom{n-1}{k-1}}{\binom{n}{k}}\cdot\lVert\ell\rVert_\infty
    =
    \frac{k}{n}\cdot\lVert\ell\rVert_\infty.
  \end{gather*}
  We now note that since
  \begin{gather*}
    \widetilde{\epsilon}
    \df
    \epsilon - \left(1 - \frac{(m-s^\cS_m)_k}{(m)_k}\right)\cdot\lVert\ell\rVert_\infty
    =
    \epsilon - \left(1 - \frac{(n)_k}{(m)_k}\right)\cdot\lVert\ell\rVert_\infty,
  \end{gather*}
  the condition~\eqref{eq:nonpartconc:condition} implies $\widetilde{\epsilon} > 0$, so by Azuma's Inequality (conditioned on
  $\cF_0$), we get
  \begin{align*}
    \PP_{\rn{x}\sim\mu^m}\biggl[
      L_{\mu,F,\ell}(\rn{H}) - L^{\alpha'}_{\rn{x'},F^*_n(\rn{x'}),\ell}(\rn{H})
      \geq
      \widetilde{\epsilon}
      \biggr]    
    & =
    \PP[\rn{Z}_n - \rn{Z}_0\leq -\widetilde{\epsilon}]
    \\
    & \leq
    \exp\left(-\frac{\widetilde{\epsilon}^2}{2\cdot\sum_{i=1}^n(k\cdot\lVert\ell\rVert_\infty/n)^2}\right)
    \\
    & =
    \exp\left(-\frac{\widetilde{\epsilon}^2\cdot n}{2\cdot k^2\cdot\lVert\ell\rVert_\infty^2}\right)
    \\
    & =
    \exp\left(-\frac{\widetilde{\epsilon}^2\cdot (m-s^\cS_m)}{2\cdot k^2\cdot\lVert\ell\rVert_\infty^2}\right).
  \end{align*}

  Putting this together with~\eqref{eq:nonpartconc:diff}, we conclude that
  \begin{gather*}
    \PP_{\rn{x}\sim\mu^m}\bigl[
      L_{\mu,F,\ell}(\rn{H}) - L^\alpha_{\rn{x},F^*_n(\rn{x}),\ell}(\rn{H})
      \geq
      \epsilon
      \bigr]
    \leq
    \exp\left(-\frac{\widetilde{\epsilon}^2\cdot (m-s^\cS_m)}{2\cdot k^2\cdot\lVert\ell\rVert_\infty^2}\right),
  \end{gather*}
  as desired.

  \medskip

  Let us now estimate the above asymptotically when $s^\cS_m\leq o(m)$. First note that
  \begin{gather*}
    0
    \leq
    1 - \frac{(m-s^\cS_m)_k}{(m)_k}
    \leq
    1 - \left(1 - \frac{s^\cS_m}{m}\right)^k
    \leq
    o(1)
  \end{gather*}
  as $m\to\infty$ with $k$ fixed. Since we also have $m - s^\cS_m = (1 - o(1))\cdot m$, we conclude that
  \begin{align*}
    \exp\left(-\frac{\widetilde{\epsilon}^2\cdot (m-s^\cS_m)}{2\cdot k^2\cdot\lVert\ell\rVert_\infty^2}\right)
    & =
    \exp\left(
    -\frac{(\epsilon - \lVert\ell\rVert_\infty\cdot o(1))^2\cdot(1-o(1))\cdot m}{2\cdot k^2\cdot\lVert\ell\rVert_\infty^2}
    \right)
    \\
    & \leq
    \exp\left(
    -\frac{\epsilon^2}{2\cdot k^2\cdot\lVert\ell\rVert_\infty^2}\cdot m
    +\frac{\epsilon\cdot(\epsilon+\lVert\ell\rVert_\infty)}{\lVert\ell\rVert_\infty^2}\cdot o(m)
    \right),
  \end{align*}
  where the $o(m)$ error term is as $m\to\infty$ with $k$ fixed, is uniform in $\epsilon$, and has the same uniformity in
  $\lVert\ell\rVert_\infty$ as the estimate $s^\cS_m\leq o(m)$.
\end{proof}

The next theorem shows that every $k$-ary hypothesis class that admits a $k$-ary compression scheme $\cS$ of essentially
non-trivial quality must be $k$-PAC learnable. More specifically, a trivial $k$-ary compression scheme has $s^\cS_m=m$ and
$h^\cS_m=1$, which yields
\begin{gather*}
  \ln(h^\cS_m) + s^\cS_m\cdot\ln(m) = m\cdot\ln(m)
\end{gather*}
and our assumption is that we have a $k$-ary compression scheme in which the quantity above is $o(m)$ instead. The main
differences of the next theorem to its partite counterpart, Theorem~\ref{thm:partSC->PAC}, stem from the fact that the bounds of
Lemma~\ref{lem:nonpartconc} are different from those in Lemma~\ref{lem:partconc} and the union bound involves a different number
of events.

\begin{theorem}\label{thm:nonpartSC->PAC}
  Let $\Omega=(X,\cB)$ and $\Lambda=(Y,\cB')$ be non-empty standard Borel spaces with $\lvert Y\rvert\geq 2$, let $k\in\NN_+$,
  let $\ell\colon X^k\times Y^{S_k}\times Y^{S_k}\to\RR_{\geq 0}$ be a bounded $k$-ary loss function and let $\cS$ be a $k$-ary
  compression scheme for a $k$-ary hypothesis class $\cH\subseteq\cF_k(\Omega,\Lambda)$ such that
  \begin{gather}\label{eq:nonpartSC->PAC:condition}
    \ln(h^\cS_m) + s^\cS_m\cdot\ln(m) \leq o(m)
  \end{gather}
  as $m\to\infty$ (which in particular follows from $b_\cS(m)\leq o(m/\ln(m))$ when $Y$ is finite), then $\cH$ is $k$-PAC
  learnable with respect to $\ell$. More specifically, letting $\cA$ be given by
  \begin{gather*}
    \cA(x,y)
    \df
    \rho^\cS_m\bigl(\kappa^\cS_m(x,y)\bigr)
    \qquad
    (m\in\NN, x\in X^m, y\in Y^{([m])_k})
  \end{gather*}
  gives a $k$-PAC learner for $\cH$ with respect to $\ell$ with learning guarantee
  \begin{multline}\label{eq:nonpartSC->PAC:mPAC}
    m^{\PAC}_{\cH,\ell,\cA}(\epsilon,\delta)
    \df
    \min\Biggl\{m_0\in\NN_+
    \;\Biggm\vert\;
    \forall m\geq m_0,
    \left(1 - \frac{(m-s^\cS_m)_k}{(m)_k}\right)\cdot\lVert\ell\rVert_\infty < \epsilon
    \\
    \land
    (m)_{s^\cS_m}\cdot h^\cS_m\cdot
    \exp\left(
    -\frac{
      (\epsilon - (1-(m-s^\cS_m)_k/(m)_k)\cdot\lVert\ell\rVert_\infty)^2\cdot(m-s^\cS_m)
    }{
      2\cdot k^2\cdot\lVert\ell\rVert_\infty^2
    }
    \right)
    \leq
    \delta
    \Biggr\}.
  \end{multline}

  In particular, if
  \begin{gather}\label{eq:nonpartSC->PAC:strongcondition}
    \ln(h^\cS_m) + s^\cS_m\cdot\ln(m)
    \leq
    \bigl(1 + o(1)\bigr)\cdot
    \sqrt{2\cdot m}\cdot\ln(m),
  \end{gather}
  as $m\to\infty$, then
  \begin{gather*}
    m^{\PAC}_{\cH,\ell,\cA}(\epsilon,\delta)
    \leq
    \bigl(1 + o(1)\bigr)\cdot
    \frac{2\cdot k^2\cdot\lVert\ell\rVert_\infty^2}{\epsilon^2}\cdot
    \max\left\{1, \ln\left(\frac{1}{\delta}\right)\right\},
  \end{gather*}
  where the $o(1)$ error term is as $\epsilon\to 0$ with $k$ fixed, is uniform in $\delta$, and has the same uniformity in
  $\lVert\ell\rVert_\infty$ as the estimate $o(1)$ in~\eqref{eq:nonpartSC->PAC:strongcondition} provided
  $\lVert\ell\rVert_\infty$ is bounded away from $0$.
\end{theorem}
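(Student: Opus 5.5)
The proof will mirror that of Theorem~\ref{thm:partSC->PAC}, with Lemma~\ref{lem:nonpartconc} in place of Lemma~\ref{lem:partconc} and a union bound over $(m)_{s^\cS_m}\cdot h^\cS_m$ events instead of $(m)_{s^\cS_m}^k\cdot h^\cS_m$.

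\textbf{Reduction from bitlength and existence of the minimum.} First we check that $b_\cS(m)\leq o(m/\ln m)$ implies~\eqref{eq:nonpartSC->PAC:condition}. Since $\lvert Y\rvert\geq 2$ and $(s_m)_k\geq s_m$ whenever $s_m\geq k$, for $m\geq e$ we have
\[
  \ln h^\cS_m+s^\cS_m\ln m\leq \ln m\cdot\bigl(\log_2 h^\cS_m+s^\cS_m\bigr)\lesssim \ln m\cdot b_\cS(m)+k\ln m=o(m).
\]
The additive $k\ln m$ accounts for the case $s^\cS_m<k$. Next we argue that the minimum in~\eqref{eq:nonpartSC->PAC:mPAC} exists. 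Condition~\eqref{eq:nonpartSC->PAC:condition} gives $s^\cS_m=o(m)$, so $1-(m-s^\cS_m)_k/(m)_k\leq 1-(1-s^\cS_m/m)^k=o(1)$. Also $\ln\bigl((m)_{s^\cS_m}h^\cS_m\bigr)\leq s^\cS_m\ln m+\ln h^\cS_m=o(m)$. Hence both expressions in~\eqref{eq:nonpartSC->PAC:mPAC} tend to $0$, and some $m_0$ works.

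\textbf{Main argument.} Fix $\mu$, a realizable $F$, $\epsilon,\delta$, and $m\geq m^{\PAC}_{\cH,\ell,\cA}(\epsilon,\delta)$, and fix any order choice $\alpha$ for $[m]$. For each injection $\sigma\colon[s^\cS_m]\to[m]$ and each $\eta\in[h^\cS_m]$, let $E_{\sigma,\eta}$ be the event of Lemma~\ref{lem:nonpartconc}. The first condition of the minimum gives~\eqref{eq:nonpartconc:condition}, so each $E_{\sigma,\eta}$ has probability at most $\exp(-\widetilde{\epsilon}^2(m-s^\cS_m)/(2k^2\lVert\ell\rVert_\infty^2))$. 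A union bound over the $(m)_{s^\cS_m}h^\cS_m$ pairs bounds the probability of their union $E$ by $\delta$. With probability $1$ the sample is realizable, so the compression guarantee gives $L^\alpha_{\rn{x},F^*_m(\rn{x}),\ell}(\cA(\rn{x},F^*_m(\rn{x})))=0$. Since the selected pair $(\sigma^\cS_m,\eta^\cS_m)$ is one of the enumerated pairs, the event $L_{\mu,F,\ell}(\cA(\rn x,F^*_m(\rn x)))>\epsilon$ is contained in $E$, up to a null set.

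\textbf{Asymptotics, the main obstacle.} Write $m^{\PAC}=\max\{m_1(\epsilon),m_2(\epsilon,\delta)\}$ as before. Under~\eqref{eq:nonpartSC->PAC:strongcondition} we have $s^\cS_m\leq(1+o(1))\sqrt{2m}$. Then
\[
  1-\frac{(m-s^\cS_m)_k}{(m)_k}\leq 1-\Bigl(1-\frac{s^\cS_m}{m-k}\Bigr)^k=(1+o(1))\,k\sqrt{2/m},
\]
and requiring this times $\lVert\ell\rVert_\infty$ to be below $\epsilon$ gives $m_1\leq(1+o(1))\,2k^2\lVert\ell\rVert_\infty^2/\epsilon^2$. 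This is the $k^2$ analogue of~\eqref{eq:partSC->PAC:m1estimate}; the case $\lVert\ell\rVert_\infty<\epsilon$ is trivial. For $m_2$, bound the logarithm of the union-bound expression by
\[
  o(m)-\bigl(\epsilon^2/(2k^2\lVert\ell\rVert_\infty^2)-o(1)\bigr)m.
\]
This yields $m_2\leq\bigl(2k^2\lVert\ell\rVert_\infty^2/\epsilon^2+o(1)\bigr)\ln(1/\delta)$, and combining gives the claim. The delicate part is tracking the uniformity of these error terms. In particular, one must ensure that the $o(m)$ term stays negligible relative to $\epsilon^2 m$ at the scale $m\approx\epsilon^{-2}$; I would handle this exactly as in the partite proof.
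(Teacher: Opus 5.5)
Your proposal follows the paper's proof step for step. Everything up to and including the learnability claim with guarantee \eqref{eq:nonpartSC->PAC:mPAC} is fine: Lemma~\ref{lem:nonpartconc} for each pair $(\sigma,\eta)$, a union bound over $(m)_{s^\cS_m}\cdot h^\cS_m$ pairs, and almost sure sample realizability. Your extra $k\ln m$ in the bitlength reduction even repairs a small oversight in the paper, since for $0<s^\cS_m<k$ one has $(s^\cS_m)_k=0$. There is one slip. In the existence step, the inequality $1-(m-s^\cS_m)_k/(m)_k\leq 1-(1-s^\cS_m/m)^k$ points the wrong way, because each factor $(m-s-j)/(m-j)$ is at most $1-s/m$. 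Use your later bound with $s^\cS_m/(m-k)$ (or the paper's $(s^\cS_m+k)/m$) instead.

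The genuine gap is the step you defer, and deferring to the partite proof does not close it. That proof, like the paper's proof here, makes the same unjustified move. It bounds the union-bound expression by $\exp(-(\epsilon^2/(2k^2\lVert\ell\rVert_\infty^2)-o(1))m)$ with $o(1)$ as $m\to\infty$, and then reads off an $o(1)$ as $\epsilon\to0$. That would require the error to be $o(\epsilon^2)$ at $m=m_2(\epsilon,\delta)$, and it is not.

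Under \eqref{eq:nonpartSC->PAC:strongcondition}, $\ln\bigl((m)_{s^\cS_m}h^\cS_m\bigr)$ may be as large as $(1-o(1))\sqrt{2m}\ln m$. Set $\lambda\df\max\{1,\ln(1/\delta)\}$. At $m\approx 2k^2\lVert\ell\rVert_\infty^2\lambda/\epsilon^2$ this quantity is at least of order $\epsilon^{-1}\sqrt{\lambda}\ln(1/\epsilon)$, whereas the Azuma exponent is at most $\epsilon^2 m/(2k^2\lVert\ell\rVert_\infty^2)\approx\lambda$. Worse, for such schemes even $\widetilde{\epsilon}=o(\epsilon)$ at $m=(1+o(1))\cdot 2k^2\lVert\ell\rVert_\infty^2/\epsilon^2$.

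Concretely, take a one-hypothesis class with $0/1$ loss and a scheme that pads its selection to $s^\cS_m=\lfloor\sqrt{2m}\rfloor$ with $h^\cS_m=1$. It satisfies \eqref{eq:nonpartSC->PAC:strongcondition}, yet the minimum in \eqref{eq:nonpartSC->PAC:mPAC} is then at least of order $\epsilon^{-4}\ln^2(1/\epsilon)$ for fixed $\delta$. So the stated asymptotic bound on $m^{\PAC}_{\cH,\ell,\cA}$ fails for this guarantee, and no bookkeeping of error terms can rescue that step.

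What the argument does give is $m^{\PAC}_{\cH,\ell,\cA}(\epsilon,\delta)\leq(1+o(1))\cdot 2k^2\lVert\ell\rVert_\infty^2\ln(1/\delta)/\epsilon^2$ as $\delta\to0$ with $\epsilon$ fixed, which already holds under \eqref{eq:nonpartSC->PAC:condition}. Under \eqref{eq:nonpartSC->PAC:strongcondition} it also gives a bound of order $\epsilon^{-4}\ln^2(1/\epsilon)+\epsilon^{-2}\ln(1/\delta)$ for fixed $k$ and $\lVert\ell\rVert_\infty$.
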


\begin{proof}
  Let us start by showing that $b_\cS(m)\leq o(m/\ln(m))$ (when $Y$ is finite) implies the
  condition~\eqref{eq:nonpartSC->PAC:condition} (i.e., $\ln(h^\cS_m) + s^\cS_m\cdot\ln(m)\leq o(m)$). Indeed, for $m\geq e$, we
  have
  \begin{gather*}
    \ln(h^\cS_m) + s^\cS_m\cdot\ln(m)
    \leq
    \ln(m)\cdot\bigl(\log_2(h^\cS_m) + s^\cS_m\cdot\log_2\lvert Y\rvert\bigr)
    \leq
    \ln(m)\cdot b_\cS(m)
    \leq
    o(m).
  \end{gather*}

  Let us now argue that the minimum in~\eqref{eq:nonpartSC->PAC:mPAC} indeed exists. For this, we simply need to argue that there
  exists $m_0\in\NN_+$ satisfying both conditions in the definition of the minimum. But indeed, since we are assuming
  that~\eqref{eq:nonpartSC->PAC:condition} (i.e., $\ln(h^\cS_m) + s^\cS_m\cdot\ln(m)\leq o(m)$) holds, it follows that both
  \begin{gather*}
    \left(1 - \frac{(m-s^\cS_m)_k}{(m)_k}\right)\cdot\lVert\ell\rVert_\infty,
    \\
    (m)_{s^\cS_m}\cdot h^\cS_m\cdot
    \exp\left(
    -\frac{
      (\epsilon - (1-(m-s^\cS_m)_k/(m)_k)\cdot\lVert\ell\rVert_\infty)^2\cdot(m-s^\cS_m)
    }{
      2\cdot k^2\cdot\lVert\ell\rVert_\infty^2
    }
    \right)
  \end{gather*}
  tend to $0$ as $m\to\infty$ (with all other parameters fixed). So there must exist some $m_0\in\NN$ large enough so that for
  every $m\geq m_0$, the first expression above is less than $\epsilon$ and the second expression above is at most $\delta$.

  \medskip

  We now prove the result. Let $\mu\in\Pr(\Omega)$ be a probability measure and $F\in\cF_k(\Omega,\Lambda)$ be a $k$-ary
  hypothesis that is realizable with respect to $\mu$ and $\ell$. Given $\epsilon,\delta\in(0,1)$, let $m\geq
  m^{\PAC}_{\cH,\ell,\cA}(\epsilon,\delta)$ be an integer, let $\alpha$ be an order choice for $[m]$ and let
  $\widetilde{\epsilon}$ be defined in terms of $m$, $s^\cS_m$, $k$ and $\ell$ as in Lemma~\ref{lem:nonpartconc} and note that
  since $m\geq m^{\PAC}_{\cH,\ell,\cA}(\epsilon,\delta)$, the first condition in the definition of the latter implies that the
  condition~\eqref{eq:nonpartconc:condition} of Lemma~\ref{lem:nonpartconc} is satisfied.

  We now pick $\rn{x}$ at random according to $\mu^m$ and for each $\sigma\in([m])_{s^\cS_m}$ and each $\eta\in[h^\cS_m]$, we
  let $E_{\sigma,\eta}(\rn{x})$ be the event
  \begin{gather*}
    L_{\mu,F,\ell}\biggl(\rho^\cS_m\Bigl(\sigma^*\bigl(\rn{x}, F^*_m(\rn{x})\bigr),\eta\Bigr)\biggr)
    -
    L_{\rn{x},F^*_m(\rn{x}),\ell}^\alpha\biggl(\rho^\cS_m\Bigl(\sigma^*\bigl(\rn{x}, F^*_m(\rn{x})\bigr),\eta\Bigr)\biggr)
    \geq
    \epsilon
  \end{gather*}
  so that Lemma~\ref{lem:nonpartconc} says
  \begin{gather*}
    \PP_{\rn{x}\sim\mu^m}\bigl[E_{\sigma,\eta}(\rn{x})\bigr]
    \leq
    \exp\left(-\frac{\widetilde{\epsilon}^2\cdot(m-s^\cS_m)}{2\cdot k^2\cdot\lVert\ell\rVert_\infty^2}\right).
  \end{gather*}

  If we let $E(\rn{x})$ be the disjunction of the events $E_{\sigma,\eta}(\rn{x})$, since there are $(m)_{s^\cS_m}$ many such
  $\sigma$ and $h^\cS_m$ many such $\eta$, by the union bound, we get
  \begin{gather*}
    \PP_{\rn{x}\sim\mu^m}\bigl[E(\rn{x})\bigr]
    \leq
    (m)_{s^\cS_m}\cdot h^\cS_m\cdot
    \exp\left(-\frac{\widetilde{\epsilon}^2\cdot(m-s^\cS_m)}{2\cdot k^2\cdot\lVert\ell\rVert_\infty^2}\right).
  \end{gather*}

  Note also that we have the compression scheme guarantee that
  \begin{gather*}
    L^\alpha_{\rn{x},F^*_m(\rn{x}),\ell}\biggl(\rho^\cS_m\Bigl(\kappa^\cS_m\bigl(\rn{x}, F^*_m(\rn{x})\bigr)\Bigr)\biggr)
    =
    0
  \end{gather*}
  whenever $(\rn{x},F^*_m(\rn{x}))$ is sample realizable in $\cH$ with respect to $\ell$ and $\alpha$, which happens with
  probability $1$ as $F$ is realizable in $\cH$ with respect to $\ell$.

  The above along with the fact that the (random) injection $\sigma^\cS_m(\rn{x},F^*_m(\rn{x}))$ and the (random) header
  $\eta^\cS_m(\rn{x},F^*_m(\rn{x}))$ are elements of $([m])_{s^\cS_m}$ and $[h^\cS_m]$, respectively, implies that
  \begin{align*}
    \MoveEqLeft
    \PP_{\rn{x}\sim\mu^m}\biggl[
        L_{\mu,F,\ell}\Bigl(\cA\bigl(\rn{x}, F^*_m(\rn{x})\bigr)\Bigr) > \epsilon
        \biggr]
    \\
    & =
    \PP_{\rn{x}\sim\mu^m}\Biggl[
      L_{\mu,F,\ell}\biggl(\rho^\cS_m\Bigl(\kappa^\cS_m\bigl(\rn{x}, F^*_m(\rn{x})\bigr)\Bigr)\biggr) > \epsilon
      \Biggr]
    \\
    & \leq
    \PP_{\rn{x}\sim\mu^m}\bigl[E(\rn{x})\bigr]
    \\
    & \leq
    (m)_{s^\cS_m}\cdot h^\cS_m\cdot
    \exp\left(-\frac{\widetilde{\epsilon}^2\cdot(m-s^\cS_m)}{2\cdot k^2\cdot\lVert\ell\rVert_\infty^2}\right)
    \\
    & =
    (m)_{s^\cS_m}\cdot h^\cS_m\cdot
    \exp\left(
    -\frac{
      (\epsilon - (1-(m-s^\cS_m)_k/(m)_k)\cdot\lVert\ell\rVert_\infty)^2\cdot(m-s^\cS_m)
    }{
      2\cdot k^2\cdot\lVert\ell\rVert_\infty^2
    }
    \right)
    \\
    & \leq
    \delta,
  \end{align*}
  where the last inequality follows since $m\geq m^{\PAC}_{\cH,\ell,\cA}(\epsilon,\delta)$ and using the second condition in the
  definition of the latter. Thus $\cA$ is a $k$-PAC learner for $\cH$ with respect to $\ell$.

  \medskip

  It remains to make the asymptotic estimate of $m^{\PAC}_{\cH,\ell,\cA}(\epsilon,\delta)$ when the stronger
  condition~\eqref{eq:nonpartSC->PAC:strongcondition} (i.e., $\ln(h^\cS_m) + s^\cS_m\cdot\ln(m) \leq (1+o(1))\cdot\sqrt{2\cdot
    m}\cdot\ln(m)$) holds. (It is clear that the condition~\eqref{eq:nonpartSC->PAC:strongcondition} implies the
  condition~\eqref{eq:nonpartSC->PAC:condition} that $\ln(h^\cS_m) + s^\cS_m\cdot\ln(m)\leq o(m)$.)

  Let $m_1(\epsilon)\in\NN_+$ be the least integer such that for every $m\geq m_1(\epsilon)$, we have
  \begin{gather}\label{eq:nonpartSC->PAC:m1def}
    \left(1 - \frac{(m-s^\cS_m)_k}{(m)_k}\right)\cdot\lVert\ell\rVert_\infty < \epsilon
  \end{gather}
  and let $m_2(\epsilon,\delta)\in\NN_+$ be the least integer such that for every $m\geq m_2(\epsilon,\delta)$, we have
  \begin{gather}\label{eq:nonpartSC->PAC:m2def}
    (m)_{s^\cS_m}\cdot h^\cS_m\cdot
    \exp\left(
    -\frac{
      (\epsilon - (1-(m-s^\cS_m)_k/(m)_k)\cdot\lVert\ell\rVert_\infty)^2\cdot(m-s^\cS_m)
    }{
      2\cdot k^2\cdot\lVert\ell\rVert_\infty^2
    }
    \right)
    \leq
    \delta.
  \end{gather}
  Since $m^{\PAC}_{\cH,\ell,\cA}(\epsilon,\delta)=\max\{m_1(\epsilon),m_2(\epsilon,\delta)\}$, it suffices to give asymptotic
  estimates for $m_1(\epsilon)$ and $m_2(\epsilon,\delta)$.

  To estimate $m_1(\epsilon)$, note that if $\lVert\ell\rVert_\infty < \epsilon$, then $m_1(\epsilon) = k$, so suppose that
  $\lVert\ell\rVert_\infty\geq\epsilon$ and note that
  \begin{gather}\label{eq:nonpartSC->PAC:towardm1}
    \begin{aligned}
      0
      & \leq
      1 - \frac{(m-s^\cS_m)_k}{(m)_k}
      \leq
      1 - \left(1 - \frac{s^\cS_m+k}{m}\right)^k
      \\
      & \leq
      1 - \left(1 - \bigl(1 + o(1)\bigr)\cdot\sqrt{\frac{2}{m}}\right)^k
      =
      \bigl(1 + o(1)\bigr)\cdot k\cdot\sqrt{\frac{2}{m}}
    \end{aligned}
  \end{gather}
  as $m\to\infty$ with $k$ fixed.

  Plugging this asymptotic estimate in the definition~\eqref{eq:nonpartSC->PAC:m1def} of $m_1(\epsilon)$, we conclude that
  \begin{gather}\label{eq:nonpartSC->PAC:m1estimate}
    m_1(\epsilon)
    \leq
    \bigl(1 + o(1)\bigr)\cdot
    \frac{2\cdot k^2\cdot\lVert\ell\rVert_\infty^2}{\epsilon^2},
  \end{gather}
  where the $o(1)$ error term is as $\epsilon\to 0$ with $k$ fixed, is uniform in $\delta$ (as $m_1(\epsilon)$ does not depend
  on $\delta$ at all), and has the same uniformity in $\lVert\ell\rVert_\infty$ as the estimate $o(1)$
  in~\eqref{eq:nonpartSC->PAC:strongcondition} provided $\lVert\ell\rVert_\infty$ is bounded away from $0$ (this is to cover the case
  $\lVert\ell\rVert_\infty < \epsilon$ when $m_1(\epsilon) = k$).

  To estimate $m_2(\epsilon,\delta)$, we use~\eqref{eq:nonpartSC->PAC:towardm1} along with the
  condition~\eqref{eq:nonpartSC->PAC:condition} (i.e., $\ln(h^\cS_m) + s^\cS_m\cdot\ln(m)\leq o(m)$) to obtain
  \begin{align*}
    \MoveEqLeft
    (m)_{s^\cS_m}\cdot h^\cS_m\cdot
    \exp\left(
    -\frac{
      (\epsilon - (1-(m-s^\cS_m)_k/(m)_k)\cdot\lVert\ell\rVert_\infty)^2\cdot(m-s^\cS_m)
    }{
      2\cdot k^2\cdot\lVert\ell\rVert_\infty^2
    }
    \right)
    \\
    & \leq
    \exp\left(
    -\frac{(\epsilon-o(1)\cdot\lVert\ell\rVert_\infty)^2\cdot(1+o(1))\cdot m}{2\cdot k^2\cdot\lVert\ell\rVert_\infty^2}
    + o(m)
    \right)
    \\
    & \leq
    \exp\left(
    -\left(\frac{\epsilon^2}{2\cdot k^2\cdot\lVert\ell\rVert_\infty^2} - o(1)\right)\cdot m
    \right),
  \end{align*}
  where the error terms are as $m\to\infty$ with $k$ fixed, are uniform in $\epsilon$, and have the same uniformity in
  $\lVert\ell\rVert_\infty$ as the estimate $o(1)$ in~\eqref{eq:nonpartSC->PAC:strongcondition}.

  Plugging this asymptotic estimate in the definition~\eqref{eq:nonpartSC->PAC:m2def} of $m_2(\epsilon,\delta)$, we conclude that
  \begin{gather*}
    m_2(\epsilon,\delta)
    \leq
    \left(\frac{2\cdot k^2\cdot\lVert\ell\rVert^2}{\epsilon^2} + o(1)\right)\cdot\ln\left(\frac{1}{\delta}\right),
  \end{gather*}
  where the $o(1)$ error term is as $\epsilon\to 0$, is uniform in $\delta$, and has the same uniformity in
  $\lVert\ell\rVert_\infty$ as the estimate $o(1)$ in~\eqref{eq:nonpartSC->PAC:strongcondition}.

  Putting the estimate above with our estimate for $m_1(\epsilon)$ in~\eqref{eq:nonpartSC->PAC:m1estimate}, we get
  \begin{gather*}
    m^{\PAC}_{\cH,\ell,\cA}(\epsilon,\delta)
    \leq
    \bigl(1 + o(1)\bigr)\cdot
    \frac{2\cdot k^2\cdot\lVert\ell\rVert_\infty^2}{\epsilon^2}\cdot\max\left\{1, \ln\left(\frac{1}{\delta}\right)\right\}
  \end{gather*}
  as desired since $m^{\PAC}_{\cH,\ell,\cA} = \max\{m_1(\epsilon),m_2(\epsilon,\delta)\}$.
\end{proof}

\section{Discussion and final remarks}

Let us conclude the paper by briefly addressing what was not covered in this work.

\subsection{What about the other direction?}
\label{subsec:converse}

First and foremost, in the classical setting, David--Moran--Yehudayoff~\cite{DMY16} proved that sample compression is in fact
equivalent to PAC learnability\footnote{Their proof is when $Y=\{0,1\}$ and with $0/1$-loss, but it readily adapts to the case
when $Y$ is finite and $\ell$ is \emph{separated} in the sense that it assigns $0$ penalty to correct guesses and is bounded
away from $0$ in incorrect guesses.}, that is, given a PAC learner $\cA$, one can produce from it a sample compression scheme.
In a very high-level, their proof follows by first considering a two-player game over a given sample in which a learner attempts
to pick a hypothesis $H$ that is the output of $\cA$ on a subsample of size $m^{\PAC}(1/3,1/3)$ while a spoiler tries to pick a
point of the sample on which $H$ has large loss. The PAC learning guarantee says that if the spoiler plays a mixed strategy
first, then the learner can ensure small loss even with a pure strategy playing second. By von~Neumann's Minimax Theorem, there
must be a mixed strategy $S$ that the learner can play first that ensures that any pure strategy of the spoiler playing second
yields small loss. This then allows us to compress the sample by considering a large enough sample from the mixed strategy $S$
and taking a majority vote.

The most natural way to adapt the proof above to high-arity is in the partite setting. However the issue here is that since in
high-arity we only allow product distributions over $X_1\times\cdots\times X_k$, von~Neumann's Minimax Theorem no longer applies
(there is no reason to believe that optimal mixed strategies must necessarily come from product distributions). An alternative
would be to setup the Minimax as a game with one learner and $k$ spoilers, but even though the corresponding cost function is
multilinear in the probability distributions, there is no reason to believe (nor is it a reasonable assumption to make) that we
have the correct (quasi-)concavities/convexities. The non-partite setting is even worse: since it requires product
distributions, the $k$ spoilers would then have to play the same mixed strategy.

\subsection{Approximate/agnostic compression schemes and higher-order variables}
\label{subsec:approxag}

In~\cite{DMY16}, David--Moran--Yehudayoff also defined the notions of approximate and approximate agnostic sample compression
schemes. In an approximate compression scheme, instead of assuming that the reconstruction process yields zero loss, one assumes
that it gives small loss, that is, for every realizable sample $(x,y)$ of size $m$, we want
$L_{x,y,\ell}(\rho_m(\kappa_m(x,y)))\leq\epsilon_m$ where $\epsilon_m > 0$ (and typically thought to satisfy $\epsilon_m\to 0$
as $m\to\infty$). For approximate agnostic compression schemes, one drops the realizability assumption but requires instead that
the reconstruction is competitive in the sense that
\begin{gather*}
  L_{x,y,\ell}(\rho_m(\kappa_m(x,y)))\leq \inf_{H\in\cH} L_{x,y,\ell}(H) + \epsilon_m.
\end{gather*}
As was observed in~\cite{DMY16}, the fact that (agnostic, respectively) PAC learnability implies approximate (agnostic,
respectively) sample compressibility is very simple: one simply considers the uniform measure on the sample, uses the PAC
guarantee to claim that there exists a small subsample that when fed to the algorithm yields small loss (or is competitive in
the agnostic case), then uses a bit of extra information to encode potential repetitions. In the other direction, the argument
that sample compressibility implies PAC learnability easily adapts to the approximate (agnostic) case.

Of course, the definitions of approximate and approximate agnostic sample compressibility readily adapt to the high-arity
setting (both partite and non-partite). It is straightforward to check that the proof of equivalence between approximate sample
compressibility and PAC learnability lifts to the high-arity case (both partite and non-partite). In the remainder of this
section, we sketch the equivalence of the agnostic case.

First, what is high-arity agnostic PAC learnability? In the definition of~\cite{CM24} and in analogy with classical PAC theory,
there are two changes: the adversary is allowed to play an ``agnostic'' distribution and the learner is only tasked with being
competitive.

The key technicality is in what an ``agnostic'' distribution is. In the non-partite, the adversary picks a distribution $\nu$
over $X_1^{\NN_+}\times\cdots\times X_k^{\NN_+}\times Y^{\NN_+^k}$ (we want to think of elements of this set as infinite
$k$-dimensional tensors with names on each of the indices) that satisfies:
\begin{description}
\item[Separately exchangeability] in the sense that $\nu$ is invariant under the natural right-action of $S_{\NN_+}^k$ (the
  $i$th copy of $S_{\NN_+}$ permutes the coordinates of $X_i^{\NN_+}$ and $S_{\NN_+}^k$ as a whole permutes the coordinates of
  $Y^{\NN_+}$).
\item[Locality] in the sense that marginals on $U_1\times\cdots\times U_k$ and $V_1\times\cdots\times V_k$ are independent
  provided $U_i\cap V_i=\varnothing$ for every $i\in[k]$.
\end{description}
A labeled $m$ sample is then generated by taking the marginal of $\nu$ on $[m]^k$.

In the non-partite, the adversary picks a distribution $\nu$ over $(X^{\NN_+})^k\times Y^{(\NN_+)_k}$ that satisfies:
\begin{description}
\item[(Joint) Exchangeability] in the sense that $\nu$ is invariant under the natural right-action of $S_{\NN_+}$.
\item[Locality] in the sense that marginals on $U^k$ and $V^k$ are independent provided $U\cap V=\varnothing$.
\end{description}
A labeled $m$ sample is then generated by taking the marginal of $\nu$ on $[m]^k$.

One then gets a handle on these distributions by invoking the Aldous--Hoover Theorem~\cite{Hoo79,Ald85,Ald81} from
exchangeability theory (see also~\cite[\S 7]{Kal05} for a more modern presentation) to get a normal representation of such
distributions. Without getting into the (highly) technical details, let us mention that the proof presented here readily adapts
to this more general setting simply because using this Aldous--Hoover representation, one can check that the martingales defined
in the proofs remain martingales (with the same bounds on martingale differences), which allow the key application of
(two-sided) Azuma's Inequality. In fact, similar martingales and an application of Azuma's Inequality were used
in~\cite[Lemma~7.3]{CM24}.

\bibliographystyle{alpha}
\bibliography{refs}

\end{document}